







\documentclass[twocolumn]{autart}
\usepackage{setspace}
\usepackage{amssymb}
\usepackage{flushend}
\usepackage{bm}
\usepackage{placeins}
\usepackage{xcolor}
\usepackage{mathtools}
\usepackage{makecell}
\usepackage{booktabs}
\usepackage{multirow} 
\usepackage{array}
\usepackage{subcaption}
\usepackage{subfig}
\usepackage{textcomp}
\usepackage{stfloats}
\usepackage{dsfont}
\usepackage{url}
\usepackage{verbatim}
\usepackage{graphicx}
\usepackage{amsmath,amsfonts}
\usepackage{algorithmic}
\usepackage{algorithm}
\usepackage{graphicx}
\usepackage{cite}
\newtheorem{definition}{Definition}

\newtheorem{theorem}{Theorem}
\newtheorem{corollary}{Corollary}
\newtheorem{assumption}{Assumption}

\newtheorem{proposition}{Proposition}
\newtheorem{remark}{Remark}
\newenvironment{proof}[1][Proof]
{\par\noindent\textbf{#1.}\ }
{\hspace*{\fill}$\square$\par}
\DeclareMathOperator*{\argmin}{arg\,min}

\begin{document}
\begin{frontmatter}

\title{Convolutional Bayesian Filtering} 

\thanks[footnoteinfo]{This work is supported by Tsinghua University Education Foundation fund (042202008). This paper was not presented at any meeting. Corresponding author: Shengbo Eben Li.
}

\author[THU_SVM]{Wenhan Cao}\ead{cwh19@mails.tsinghua.edu.cn},
\author[THU_SVM]{Shiqi Liu}\ead{liushiqi19@gmail.com},
\author[PKU]{Chang Liu}\ead{changliucoe@pku.edu.cn},     \author[THU_SVM]{Zeyu He}\ead{hezeyu1549@gmail.com},
\author[THU_MATH,CAS_MATH]{Stephen S.-T. Yau}\ead{yau@uic.edu},  
\author[THU_SVM]{Shengbo Eben Li}\ead{lisb04@gmail.com},

\address[THU_SVM]{School of Vehicle and Mobility, Tsinghua University, Beijing 100084, China}                                             
\address[PKU]{Department of Advanced Manufacturing and Robotics, Peking University, Beijing 100871, China}             

\address[THU_MATH]{Department of Mathematical Sciences, Tsinghua University, Beijing 100084, China}        

\address[CAS_MATH]{Beijing Institute of Mathematical Sciences and Applications (BIMSA), Beijing 101408, China} 
          
\begin{keyword}                           
Bayesian filtering; conditional probability; convolution; model mismatch               
\end{keyword}                             

\begin{abstract}                          
Bayesian filtering serves as the 
mainstream framework of state estimation in dynamic systems.
Its standard version utilizes total probability rule and Bayes' law alternatively, where how to define and compute conditional probability is critical to state distribution inference. 
Previously, the conditional probability is assumed to be exactly known, 
which represents a measure of the occurrence probability of one event, given the second event.
In this paper, we find
that by adding an additional event that stipulates
an inequality condition, we
can transform the conditional probability into a special integration that is analogous to convolution. 
Based on this transformation, we show that both transition
probability and output probability can be generalized to convolutional forms, 
resulting in a more general filtering framework that we call convolutional Bayesian filtering.  This new framework encompasses standard
Bayesian filtering as a special case when the distance metric of the inequality condition
is selected as Dirac delta function. It also allows for a more nuanced consideration of model mismatch by choosing different types of inequality conditions.
For instance, when the distance metric is defined in a distributional sense, the transition probability and output probability can be approximated by simply rescaling them into fractional powers. 
Under this framework, a robust version of Kalman filter can be constructed by only altering the noise covariance matrix, while maintaining the conjugate nature of Gaussian distributions.
Finally, we exemplify the effectiveness of our approach by reshaping classic filtering algorithms into convolutional versions, including Kalman filter, extended Kalman filter, unscented Kalman filter and particle filter.
\end{abstract}
\end{frontmatter}

\section{Introduction}\label{sec:introduction}

Accurately estimating the state value of dynamic systems is a crucial task in science and engineering, such as 
robotics, power systems, aerospace engineering, and manufacturing.  
Since the 1960s, Bayesian filtering has become a principled framework for optimal state estimation. The essence of this framework is to find a balance between uncertain system model and noisy state measurement. Its associated algorithm iteratively updates the probability density function of system state using the prior from the last step and the likelihood of the new observation. Afterward, typical estimation criteria like minimum mean-square error and maximum a posteriori are utilized to acquire the optimal point estimate.
    
In mathematics, Bayesian filtering relies on two conditional probabilities: transition probability and output probability. The transition probability describes how the system state evolves over time, and the output probability depicts the relationship between noisy measurement and ground truth state. To incorporate the information of those two probabilities, each iteration of Bayes filter is composed of two steps \cite{chen2003bayesian,distributedBayes}: prediction and update. 
The prediction step employs \textit{total probability rule} to integrate the product of transition probability and the state distribution of previous time to obtain the prior distribution.
The update step employs  \textit{Bayes' law}
to calculate the posterior distribution by adding information of the current measurement, where the output probability is used as a likelihood term.
Since being proposed, Bayesian filtering has become the foundation of optimal filtering algorithms, including the well-known Kalman filter family, particle filter, and variational Bayesian filter.

The origin of optimal filtering theory can be traced back to the early 1940s, marked by the groundbreaking contribution of Norbert Wiener \cite{ wiener1949extrapolation} and Kolmogorov \cite{kolmogorov1941interpolation}. This field reached a significant milestone in 1960 with the invention of discrete-time Kalman filter \cite{kalman1960new}, followed by its continuous-time version published
one year later \cite{kalman1961new}. Unlike Wiener's work, which deals with stationary processes in the frequency domain, Kalman filter addresses dynamic processes in the time domain. 
The Kalman filter is a direct consequence of applying Bayesian filtering to linear Gaussian systems. 
In its prediction step, the conditional  probability, i.e., transition probability of system model, is Gaussian. 
Due to the closure property of Gaussian distributions in linear transformation, the resulting expectation after applying total probability rule is also Gaussian, which is referred to as the prior distribution.
In the update step, the conditional probability, i.e., output probability of measurement model, is naturally Gaussian. Owing to the
conjugacy property of Gaussian distributions, the resulting posterior of Bayes' law keeps Gaussian. 
Since both the prior and posterior are proven to be Gaussian, Kalman filter can be solved analytically by solely computing the mean and variance of Gaussian distribution using closed-form formulas.

  
When facing nonlinear systems, one big issue of Bayesian filtering is that the closure and conjugacy properties no longer hold. As a result, the calculation of  total probability rule and Bayes' law in almost all nonlinear systems has no analytical solution. So far,
 several approximation methods have been proposed to replace the accurate calculation of the two rules. A notable early advancement in this area was extended Kalman filter (EKF), pioneered by NASA for spacecraft navigation \cite{smith1962application}. This extended filter employs 
 the Taylor series expansion to linearize nonlinear dynamics around the current state. Its state estimation can achieve the so-called first-order accuracy, i.e., EKF is perfect if the dynamics is linear with respect to the state. 
In contrast,  unscented Kalman filter (UKF) employs the unscented transform, a deterministic sampling technique, to achieve a third-order accuracy in approximating nonlinear dynamics for symmetric noise distributions \cite{julier2004unscented}.
This technique acquires the transformed distribution by generating a set of sigma points around the mean state estimate and then propagating them through the known nonlinear function, offering the advantage of preserving second-moment information compared to EKF. 
Obviously, both EKF and UKF  implicitly calculate the solution of total probability rule and Bayes' law using Gaussian distributions. Due to the adoption of approximation techniques, neither of them can offer formal guarantees on the estimation accuracy in highly nonlinear systems.
 
Instead of approximate  prior and posterior as Gaussian distributions, particle filter (PF) represents them with a group of particles by the Monte Carlo method \cite{liu1998sequential}. In the prediction step, particles are propagated according to the transition probability of system model to predict the distribution of next state. The resulting particles constitute a discrete approximation of the prior. In the update step, each particle receives a weight related to the output probability of observed data. All the particles are resampled according to their weights, which builds a discrete approximation of the posterior. It has been proven that these kinds of discrete distributions can converge to real distributions as the number of particles becomes sufficiently large. Nevertheless, PF often requires substantial computational resources due to the use of Monte carlo sampling, which limits its application in many real-time scenarios.

The variational Bayesian filter addresses the intensive computation associated with particles by adopting variational inference as an alternative approximation technique \cite{sarkka2009recursive, krishnan2017structured}. The prior of its prediction step is assumed  to be in a Gaussian form. This assumption is achieved by computing the expectation of a conditional probability using the closure of Gaussian distribution, which is identical to Kalman filter.
The update step avoids calculating the computationally intensive integral in the Bayes' law. Instead, it seeks to numerically minimize the Kullback-Leibler divergence between the proposal distribution and the real posterior. The proposal distribution is often chosen as a parameterized function to obtain an approximate solution of the minimization problem. The real posterior is represented as the product of the prior distribution and the output probability. 
In general, variational Bayesian filter is computationally beneficial in high-dimensional estimation problems and it has been widely adopted in adaptive filtering applications.

All the state estimation algorithms discussed above adhere strictly to the Bayesian filtering framework. In this framework, the mathematical form of conditional probabilities, including transition probability and output probability, plays an important rule in computing total probability rule and Bayes' law.  The standard definition of conditional probability is
a measure of the occurrence probability of one event, given
the second event. Moreover, its distribution in the whole space is often assumed to be exactly known in Bayes filter design.
In this paper, we find that by conditioning on an additional event, which stipulates a distance metric between two observed variables 
within a specified threshold, one can transform the conditional probability to a special integral form that is similar to convolution operation. This definition relaxes the  necessity of information completeness, which allows us to design a more general filter. 
We define this new probability as convolutional conditional probability. 

Based on this definition, the transition probability can be extended to a convolutional form by conditioning on the event that the real state and its virtual state satisfy an inequality condition. The same extension can be applied to the output probability. Collectively, these two extensions forge a generalized filtering framework, which we refer to as convolutional Bayesian filtering.
This new framework encompasses standard Bayesian filtering as a special case when the distance metric is set as the Dirac delta function. 
One of its natural benefit is the capability to explicitly handle mismatches between mathematical model and the real system by tailoring the distance metric properly.

Under this new framework, we can reformulate nearly all Bayesian filtering algorithms into a more generalized type. Particularly, convolutional Bayes filter possesses analytical forms of convolution operation in systems with Gaussian noises, which allows to design a robust Kalman filter family. For non-Gaussian systems, convolution operation usually has no analytical forms but can be efficiently
approximated by a newly proposed exponential density rescaling technique. This technique enables to rescale transition probability and output probability into their fractional powers when the distance metric is defined in a distributional sense.  
We further establish the theoretical connection between this approximation technique and information bottleneck theory. It is proven that the fractional power from density rescaling is related to Lagrange multiplier of an optimization problem whose objective is to modulate the balance between preserving information about measurement model and squeezing representation of measurement data.


The remainder of this paper is organized as follows: Section~\ref{sec.conditional probability framework} introduces the definition of convolutional conditional probability. Section~\ref{sec.convolutional Bayesian filtering} discusses the framework of convolutional Bayesian filtering.
Section~\ref{sec.theoretical approximation} introduces an approximation technique for non-Gaussian systems. 
Section~\ref{sec.simulations} shows the simulation results. Section~\ref{sec.discussion and conclusion}  concludes this paper.

\section{Convolutional Conditional Probability}\label{sec.conditional probability framework}
As discussed before, Bayesian filtering is built upon two pillars:
total probability rule \eqref{eq.total probability rule} and Bayes' law \eqref{eq.Bayes' law}. Both of them rely on how to handle the conditional probability $p(y|x)$, which is a measure of the occurrence probability of the event $\left\{\mathbf{y} = y\right\}$, given the event $\left\{\mathbf{x} = x\right\}$. 
This can also be interpreted as the ratio of the probability of both events happening to the probability of the original event.
According to this interpretation, the total probability rule is
\begin{subequations}\label{eq.rule}
\begin{equation}
p(y) = \int {p(y|x)}p(x) \, \mathrm{d}x,
\label{eq.total probability rule}
\end{equation}
\text{and the Bayes' law is}
\begin{equation}
p(x|y) = \frac{p(x){p(y|x)}}{\int p(x) {p(y|x)}\,\mathrm{d} x}.
\label{eq.Bayes' law}
\end{equation}
\end{subequations}
Note that we use boldface to denote a random variable, such as \( \mathbf{x} \), and normal font to denote the realization of this variable, such as \( x \).
Previously, the explicit form of the conditional probability $p(y|x)$ is assumed known directly in Bayes filter design. One may be interested in what will happen if the conditional probability $p(y|x)$ is unknown. This question helps us to conceive a new definition of conditional probability, i.e., convolutional conditional probability.
\begin{figure}[t]
\centering
\begin{subfigure}[b]{0.30\textwidth}
\includegraphics[width=\textwidth]{ 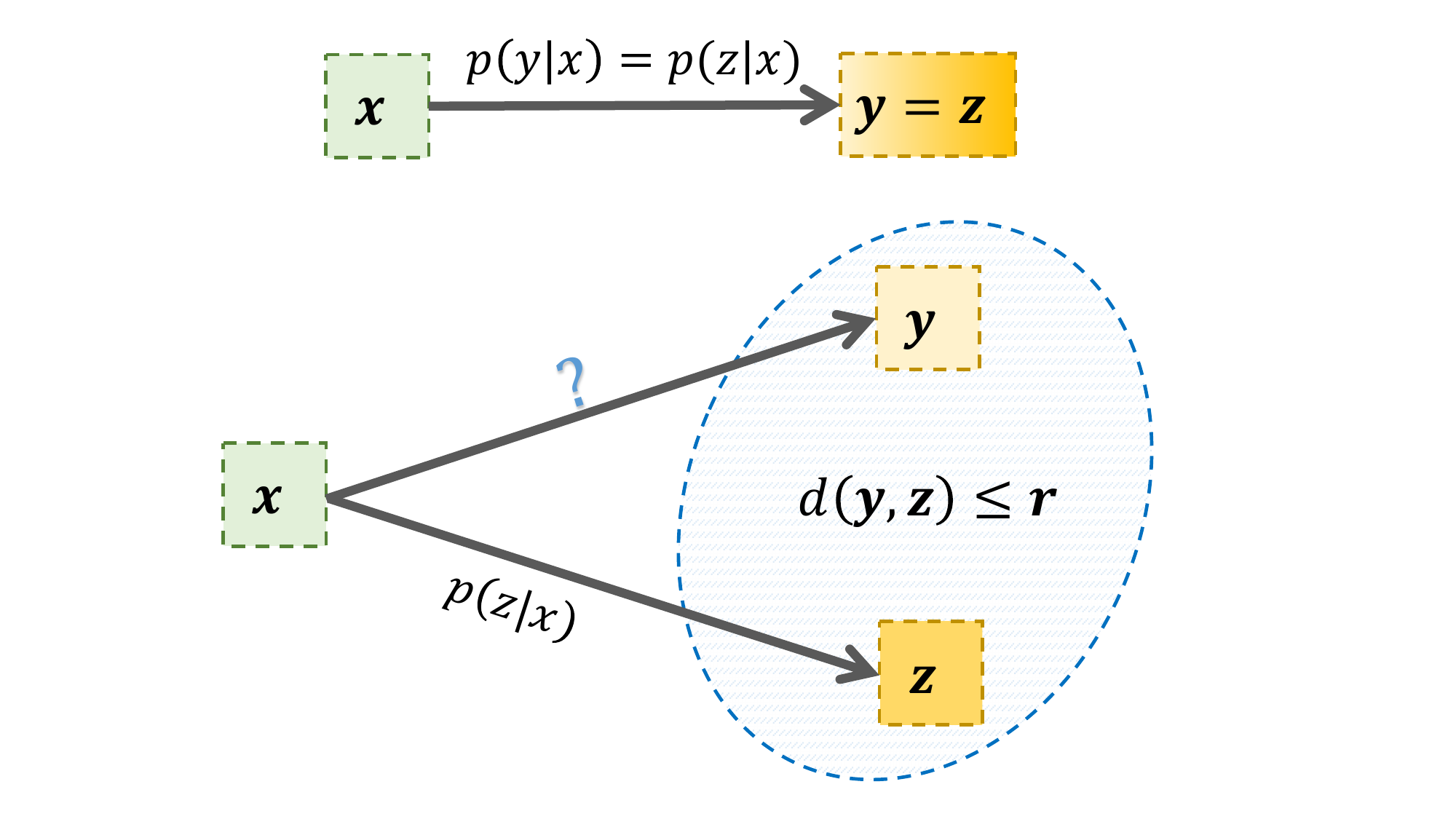}
\caption{}
\label{fig.mathematical foundation1}
\end{subfigure}
\hfill
\begin{subfigure}[b]{0.30\textwidth}
\includegraphics[width=\textwidth]{ 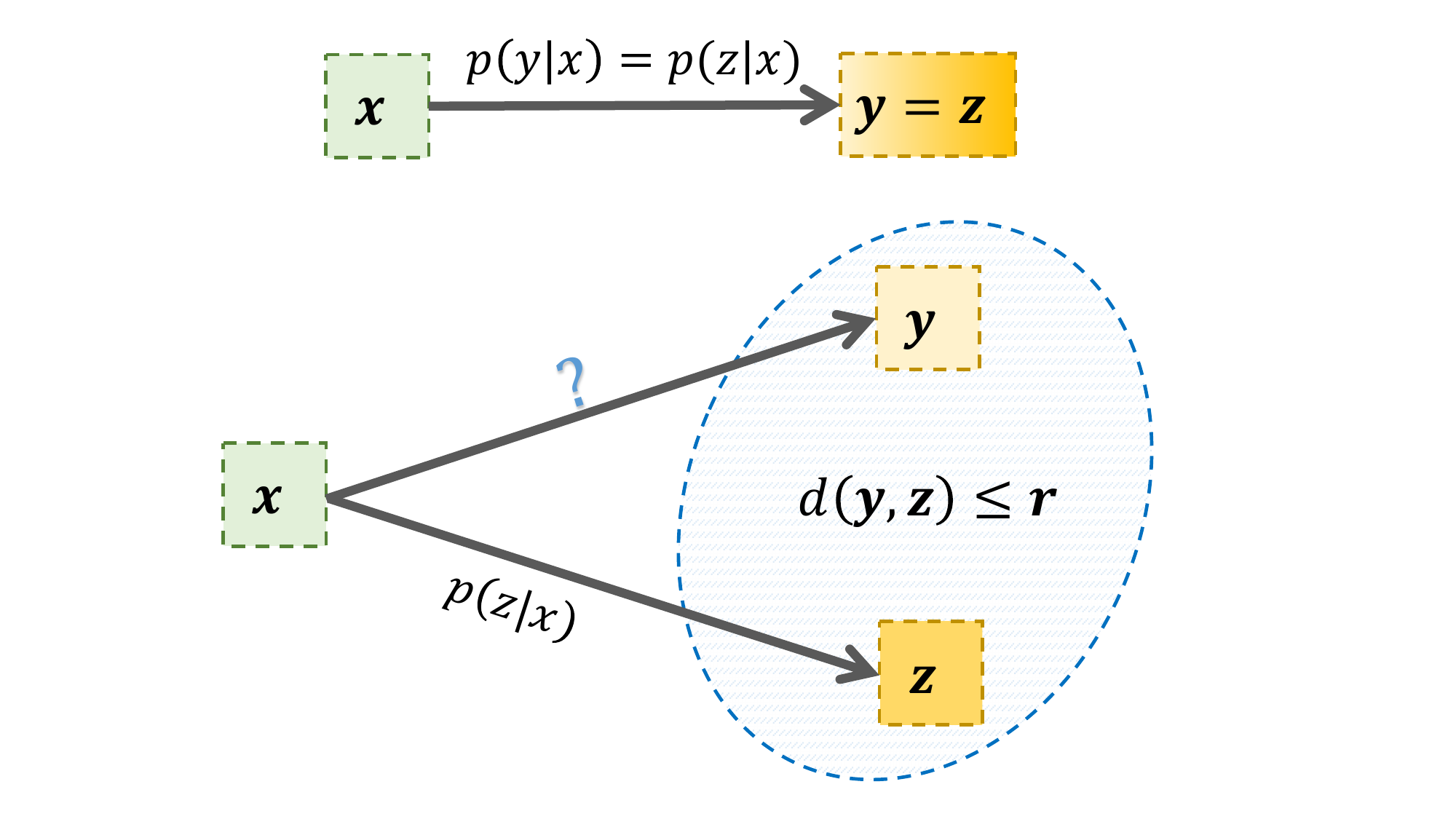}
\caption{}
\label{fig.mathematical foundation2}
\end{subfigure}
\caption{(a) $\mathbf{y}$ and $\mathbf{z}$ are identical ($\mathbf{y}=\mathbf{z}$). (b) The distance between $\mathbf{y}$ and $\mathbf{z}$ is bounded ($d(\mathbf{y},\mathbf{z}) \leq \mathbf{r}$).} 
\label{fig.mathematical foundation}
\end{figure}

Given three random variables \(\mathbf{x}\), \(\mathbf{y}\), and \(\mathbf{z}\), where the information of \(p(z|x)\) is known and certain constraints exist between \(\mathbf{y}\) and \(\mathbf{z}\), our objective is to compute
\(p(y|x)\) by leveraging these constraints. 
In the case that \(\mathbf{y}\) and \(\mathbf{z}\) are equal, i.e., \(\mathbf{y} = \mathbf{z}\), we have \(p(y|x) = p(z|x)\), as shown in Fig.~\ref{fig.mathematical foundation1}. Conversely, if $\mathbf{y}$ is not equal to $\mathbf{z}$ and their difference is bounded by an inequality function, we can define a convolutional version of  conditional probability, as shown in Fig. \ref{fig.mathematical foundation2}.
\begin{definition}[Convolutional Conditional Probability]\label{prop.convolutional conditional probability}
Given $p(z|x)$, if $\mathbf{y}$ and $\mathbf{z}$ are conditionally independent given $\mathbf{x}$, then convolutional conditional probability is defined as 
\emph{ $$p_{c}(y|x):= p(y|x, d(\mathbf{y}, \mathbf{z}) \leq \mathbf{r}).$$}
\end{definition}
Here, \(d(\mathbf{y}, \mathbf{z}) \leq \mathbf{r}\) is the inequality condition, where \(\mathbf{r}\) is a threshold random variable with known cumulative distribution function \(F(r)\), and \(d(\cdot, \cdot)\) is the distance metric for two random variables. 
The calculation of convolutional conditional probability $p_c(y|x)$ is summarized in the subsequent proposition:
\begin{proposition}
The convolutional conditional probability satisfies
\emph{
\begin{equation}\label{eq.conditional probability}
\begin{aligned}
p_{c}(y|x) &=
\frac{
\int_{z} \left(1-F(d(y, z)) \right) p(z|x) \, \mathrm{d}z}{\int_{y} \int_{z} \left(1-F(d(y, z)) \right) p(z|x) \, \mathrm{d}z \, \mathrm{d}y}
\\
&\propto \int_{z} \left(1-F(d(y, z)) \right) p(z|x) \, \mathrm{d}z.    
\end{aligned}
\end{equation}
}
\end{proposition}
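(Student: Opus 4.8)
The plan is to unfold the definition $p_c(y|x) = p(y \mid x, d(\mathbf{y},\mathbf{z}) \leq \mathbf{r})$ using the elementary ratio form of conditional probability and then discharge the conditioning event by integrating out the two latent quantities it involves: the auxiliary variable $\mathbf{z}$ and the random threshold $\mathbf{r}$. Writing $A$ for the event $\{d(\mathbf{y},\mathbf{z}) \leq \mathbf{r}\}$, I would first record
\[
p_c(y|x) = \frac{p(y, A \mid x)}{p(A \mid x)},
\]
so that the whole task reduces to evaluating the joint object $p(y, A \mid x)$ in the numerator; the denominator is then automatically its integral over $y$, which supplies the normalization and hence the proportionality statement.

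For the numerator I would factor the conditioning on the event through the value of $y$, writing $p(y, A\mid x) = p(A \mid x, y)\, p(y\mid x)$, and then marginalize the event probability over the latent pair $(\mathbf{z},\mathbf{r})$. Holding $\mathbf{y}=y$ and $\mathbf{z}=z$ fixed, the event $A$ collapses to the threshold event $\{\mathbf{r} \geq d(y,z)\}$, whose probability is the survival function $P(\mathbf{r} \geq d(y,z)) = 1 - F(d(y,z))$ since $F$ is the distribution function of $\mathbf{r}$. Integrating against the conditional law of $\mathbf{z}$ and invoking the hypothesis that $\mathbf{y}$ and $\mathbf{z}$ are conditionally independent given $\mathbf{x}$ — so that $p(z\mid x,y) = p(z\mid x)$ — yields
\[
p(A \mid x, y) = \int_z \big(1 - F(d(y,z))\big)\, p(z\mid x)\,\mathrm{d}z.
\]
Substituting back gives $p(y,A\mid x) = p(y\mid x)\int_z \big(1 - F(d(y,z))\big)\,p(z\mid x)\,\mathrm{d}z$, and dividing by $p(A\mid x) = \int_y p(y,A\mid x)\,\mathrm{d}y$ reproduces the claimed ratio, with the proportional form following by discarding the $y$-independent denominator.

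The hard part — and the step that needs the most care — is the treatment of the prior $p(y\mid x)$. Because the entire premise is that $p(y\mid x)$ is unavailable and only $p(z\mid x)$ is known, the numerator and denominator must be arranged so that this factor cancels; this is exactly what occurs once $\mathbf{y}$ is assigned a non-informative (uniform) prior, after which $p(y\mid x)$ becomes a constant that is absorbed into the normalizing integral. I would therefore make this modeling choice explicit and confirm it is consistent with treating $\{d(\mathbf{y},\mathbf{z}) \leq \mathbf{r}\}$ as the \emph{only} source of coupling between $\mathbf{y}$ and $\mathbf{z}$. A secondary technical point is the well-posedness of conditioning simultaneously on the density value $\{\mathbf{y}=y\}$ and the inequality event $A$; I would handle this by reading the numerator as the limit of conditioning on $\{\mathbf{y} \in [y, y+\mathrm{d}y]\}$, so that the mixed conditioning is rigorously the joint density-times-event probability above, while the independence of $\mathbf{r}$ from $(\mathbf{x},\mathbf{y},\mathbf{z})$ is precisely what licenses replacing the event probability by the survival function $1 - F(\cdot)$.
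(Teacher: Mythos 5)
Your proposal is correct and follows essentially the same route as the paper's own proof: both apply Bayes' law to condition on the event $\{d(\mathbf{y},\mathbf{z})\leq\mathbf{r}\}$, impose a flat (uninformative) prior $p_{\mathrm{pri}}(y|x)=C$ so that it cancels in the normalization, and evaluate the likelihood by marginalizing over $\mathbf{z}$, replacing the threshold event by the survival function $1-F(d(y,z))$ and invoking the conditional independence of $\mathbf{y}$ and $\mathbf{z}$ given $\mathbf{x}$ to reduce $p(z|x,y)$ to $p(z|x)$. If anything, you are slightly more careful than the paper, since you make explicit both the independence of $\mathbf{r}$ from $(\mathbf{x},\mathbf{y},\mathbf{z})$ needed to use $1-F(\cdot)$ and the limiting interpretation of the mixed density--event conditioning, both of which the paper leaves implicit.
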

\begin{proof}
According to Bayes' law, we have
\begin{equation}\label{eq.proof Bayes}
\begin{aligned}
p_c(y|x) 
= \frac{P(d(\mathbf{y}, \mathbf{z}) \leq \mathbf{r}|x, y) p_\text{pri}(y| x)}{\int_{y} P(d(\mathbf{y}, \mathbf{z}) \leq \mathbf{r}|x, y) p_\text{pri}(y| x) \, \mathrm{d}y}.
\end{aligned}
\end{equation} 
In \eqref{eq.proof Bayes}, $p_\text{pri}(y|x)$ is the prior of $p(y|x)$. It is chosen as an uninformative probability since we have no knowledge of $p(y|x)$ \cite{friston2002classical}:
\begin{equation}\label{eq.proof flat prior}
p_\text{pri}(y|x) = C, \quad C>0.  
\end{equation}
The likelihood term $P(d(\mathbf{y}, \mathbf{z}) \leq \mathbf{r}|x, y)$ in \eqref{eq.proof Bayes} is simplified as
\begin{equation}\label{eq.proof likelihood}
\begin{aligned}
&P(d(\mathbf{y}, \mathbf{z}) \leq \mathbf{r}|x, y) 
\\
=&P(d(y, \mathbf{z}) \leq \mathbf{r}|x, y) 
\\
=& \int_{z} \left(1-F(d(y, z)) \right) p(z|x,y) \, \mathrm{d}z
\\
=& \int_{z} \left(1-F(d(y, z)) \right) p(z|x) \, \mathrm{d}z.
\end{aligned}
\end{equation}
Substituting \eqref{eq.proof flat prior} and \eqref{eq.proof likelihood} into \eqref{eq.proof Bayes}, we have \eqref{eq.conditional probability}. 
Note that the final equation of \eqref{eq.proof likelihood} holds due to the conditional independence of $\mathbf{y}$ and $\mathbf{z}$, given $\mathbf{x}$.
\end{proof}

\begin{remark}
The calculation of convolutional conditional probability \emph{$p_c(y|x)$} resembles convolution operation. In the convolution operation, a kernel is applied over an input space to generate a modified output. Here, \( 1-F(d(y, z)) \) serves as the kernel function, which is a weighting coefficient of  \( p(z|x) \) based on the distance between $\mathbf{y}$ and  $\mathbf{z}$ .  
\end{remark}

We want to emphasize that $p(y|x)$, as used in \eqref{eq.total probability rule} and \eqref{eq.Bayes' law}, can become any form of conditional probabilities. Actually, $p_c(y|x)$ represents a specific form of conditional probability, which measures the probability of the event $\{\mathbf{y}=y\}$, conditioned on two events $\{\mathbf{x} = x\}$ and $\{d(\mathbf{y}, \mathbf{z}) \leq \mathbf{r}\}$. Compared to standard definition, convolutonal conditional probability has the third event which describes the upper bound between two random variables.  Under this new definition, one can substitute $p(y|x)$ with $p_c(y|x)$ in \eqref{eq.total probability rule} and \eqref{eq.Bayes' law} to construct two new rules:
\begin{subequations}\label{eq.convolutional rule}
\begin{align}
p(y) &= \int_x p_c(y|x) p(x) \, \mathrm{d}x, \label{eq.convolutional total probability rule} \\
p(x|y) &= \frac{p(x) p_c(y|x)}{\int_x p(x) p_c(y|x) \, \mathrm{d}x}. \label{eq.convolutional Bayes' law}
\end{align}
\end{subequations}
Here, \eqref{eq.convolutional total probability rule} can be regarded as a generalized total probability rule and \eqref{eq.convolutional Bayes' law} can be regarded as a generalized Bayes' law. Note that
$p(y)$ and $p(x|y)$ in \eqref{eq.convolutional rule} are distinct from those notations in \eqref{eq.rule} because they implicitly condition on the third 
event $\{d(\mathbf{y}, \mathbf{z}) \leq \mathbf{r}\}$, while this event in \eqref{eq.rule} is reduced to $\{\mathbf{y}=\mathbf{z}\}$. That is to say, an equivalence event is  implicitly conditioned in standard definition \eqref{eq.rule}. It can be proven that total probability rule \eqref{eq.total probability rule} and Bayes' law \eqref{eq.Bayes' law} are the special cases of \eqref{eq.convolutional total probability rule} and \eqref{eq.convolutional Bayes' law}, as described in Proposition \ref{prop.special case}.

\begin{proposition}[Limiting Property]\label{prop.special case}
Choose $\mathbf{r}$ as the exponential distribution, i.e., $\mathbf{r}\sim \mathrm{Exp}(\lambda)$ and $d(y,z) = \|y-z \|^2/ \sigma^2$, where $\lambda >0$ is  exponential parameter, and $\sigma >0$ is scale parameter. As $\sigma \to 0$, $p_c(y|x)$ reduces to $p(y|x)$, \eqref{eq.convolutional total probability rule} reduces to total probability rule \eqref{eq.total probability rule}, and  \eqref{eq.convolutional Bayes' law} reduces to Bayes' law \eqref{eq.Bayes' law}.
\end{proposition}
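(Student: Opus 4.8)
The plan is to evaluate the kernel $1-F(d(y,z))$ explicitly for the stated choice of $\mathbf{r}$ and $d$, and then recognize the resulting integral as a Gaussian smoothing of $p(\cdot|x)$ whose width collapses as $\sigma\to 0$. For the exponential law the survival function is $1-F(r)=e^{-\lambda r}$ on $r\geq 0$, so substituting $d(y,z)=\|y-z\|^2/\sigma^2$ gives $1-F(d(y,z))=\exp(-\lambda\|y-z\|^2/\sigma^2)$. Writing $n=\dim(\mathbf{y})$, this is, up to the constant $K_\sigma=(\pi\sigma^2/\lambda)^{n/2}$, an isotropic Gaussian density in $z$ centered at $y$ with covariance proportional to $\sigma^2/\lambda$; I denote the normalized kernel by $\phi_\sigma(y-z)=K_\sigma^{-1}\exp(-\lambda\|y-z\|^2/\sigma^2)$.

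First I would simplify the denominator in \eqref{eq.conditional probability}. By Fubini's theorem the order of integration may be swapped, yielding $\int_z\bigl(\int_y \exp(-\lambda\|y-z\|^2/\sigma^2)\,\mathrm{d}y\bigr)p(z|x)\,\mathrm{d}z$. The inner integral is the Gaussian integral $K_\sigma$, which is independent of $z$, so the denominator equals $K_\sigma\int_z p(z|x)\,\mathrm{d}z=K_\sigma$. Dividing numerator and denominator by $K_\sigma$ then collapses \eqref{eq.conditional probability} to the clean form $p_c(y|x)=\int_z \phi_\sigma(y-z)\,p(z|x)\,\mathrm{d}z$, i.e. the convolution of the true conditional with a normalized Gaussian kernel.

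The key step is to invoke the approximate-identity (nascent delta) property: as $\sigma\to 0$ the family $\phi_\sigma$ concentrates at the origin and $\phi_\sigma(y-z)\to\delta(y-z)$ in the distributional sense, whence $\int_z \phi_\sigma(y-z)\,p(z|x)\,\mathrm{d}z\to p(y|x)$. This establishes the first claim, $p_c(y|x)\to p(y|x)$. The reduction of \eqref{eq.convolutional total probability rule} to \eqref{eq.total probability rule} and of \eqref{eq.convolutional Bayes' law} to \eqref{eq.Bayes' law} then follows immediately by substituting this pointwise limit into the respective integrals over $x$ and passing the limit inside.

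The main obstacle I anticipate is making the delta-convergence rigorous and justifying the interchange of the limit with the integrals rather than the kernel algebra, which is routine. The approximate-identity argument delivers convergence $p_c(y|x)\to p(y|x)$ at every point where $p(\cdot|x)$ is continuous, but to move the limit inside the $x$-integrals of the total-probability and Bayes formulas one needs a domination hypothesis. I would therefore state mild regularity assumptions explicitly, for instance that $p(y|x)$ is continuous and bounded in $y$ with an integrable envelope, so that dominated convergence applies; I would also remark that these conditions are satisfied for the Gaussian and other standard noise models used elsewhere in the paper.
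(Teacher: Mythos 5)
Your proposal is correct and takes essentially the same route as the paper's own proof: both evaluate the exponential survival function to get the kernel $e^{-\lambda\|y-z\|^2/\sigma^2}$, normalize it into a Gaussian density (your $\phi_\sigma = K_\sigma^{-1}e^{-\lambda\|y-z\|^2/\sigma^2}$ is exactly the paper's $\gamma e^{\eta}$), and invoke the nascent-delta limit as $\sigma \to 0$. Your Fubini argument showing the denominator equals $K_\sigma$ exactly for every $\sigma>0$, and your explicit continuity and dominated-convergence hypotheses, are welcome refinements of steps the paper leaves implicit, but they do not change the underlying argument.
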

\begin{proof}
To simplify the derivation, let us define 
\begin{equation}\nonumber
\begin{aligned}
\eta := -\frac{\lambda \|y-z \|^2}{ \sigma^2}, \,
\gamma := \frac{\lambda^{\frac{n}{2}}}{\left(\pi \sigma^2 \right)^{\frac{n}{2}}},
\end{aligned}
\end{equation}
where $n$ is the dimension of $y$ or $z$.
From Proposition \ref{prop.convolutional conditional probability},
we have
\begin{equation}\nonumber
\begin{aligned}
p_c(y|x) &=
\frac{
\int_{z} e^\eta p(z|x) \, \mathrm{d}z}{\int_{y} \int_{z} e^{\eta} p(z|x) \, \mathrm{d}z \, \mathrm{d}y} 
\\
&= \frac{\gamma
\int_{z} e^{\eta} p(z|x) \, \mathrm{d}z}{\int_{y} \gamma 
 \int_{z} e^\eta p(z|x) \, \mathrm{d}z \, \mathrm{d}y}. 
\end{aligned}
\end{equation}
As $\sigma \to 0$, $\gamma
e^{\eta}$ becomes the Dirac function of $z$ centered at $y$. Thus, we have
\begin{equation}\label{eq.convergence}
\begin{aligned}
\lim_{\sigma \to 0} p_c(y|x) =& 
\lim_{\sigma \to 0} \frac{
\int_{z}  p(z|x) \delta(z-y) \, \mathrm{d}z}{\int_{y} \int_{z} p(z|x) \delta(z-y) \, \mathrm{d}z \, \mathrm{d}y}
\\
=& 
p(y|x).    
\end{aligned}  
\end{equation}
As a result, \( p_c(y|x) \) converges to \( p(y|x) \) as \( \sigma \) approaches 0, and accordingly, \eqref{eq.convolutional total probability rule} and \eqref{eq.convolutional Bayes' law} converge to ~\eqref{eq.total probability rule} and \eqref{eq.Bayes' law}, respectively.
\end{proof}

Proposition~\ref{prop.special case} elucidates that the kernel function converges to a Dirac delta function at \( y \) as the scale parameter  tends to zero. As a result, the convolutional conditional probability reduces to its standard version.
For finite values of scale parameter, this new definition considers a controllable amount of uncertainty governed by the scale parameter, offering an extension to the previous one.

\section{Convolutional Bayesian Filtering}\label{sec.convolutional Bayesian filtering}
In this section, we demonstrate how model mismatches in the filtering problem can be explicitly addressed using convolutional conditional probability. This is achieved by conditioning on an additional event representing the error bound between the system model and the real system. Further, by substituting the total probability rule and Bayes' law with \eqref{eq.convolutional total probability rule} and \eqref{eq.convolutional Bayes' law} in the Bayes filter, we can establish a more generalized filtering framework called convolutional Bayesian filtering.
\subsection{Uncertain Hidden Markov Model with Model Mismatch}
The essence of Bayesian filtering is to find a balance
between the stoasticities of  state transition and state observation. The stochasticity of the former comes from inherent randomness in the environment dynamics while that of the latter comes from sensor noises.  These two stochastic processes are typically represented by hidden Markov model (HMM):
\begin{equation} \label{eq.hmm}
\begin{aligned}
\mathbf{x}_0 &\sim p_0(x_0),\\
\mathbf{x}_t &\sim p(x_t|x_{t-1}),\\
\mathbf{y}_t &\sim p(y_t|x_t).
\end{aligned}
\end{equation}
Here, $\mathbf{x}_t \in \mathbb{R}^n$ 
is the system state and $\mathbf{y}_t \in \mathbb{R}^m$ is the corresponding measurement. 
Besides, $p_0$ denotes the probability distribution of the initial state $\mathbf{x}_0$, $p(x_t|x_{t-1})$ represents the transition probability, and $p(y_t|x_t)$ is the output probability. 

The standard HMM implicitly assumes that the real system is perfectly modelled, i.e., \eqref{eq.hmm} is an exact description of system dynamics and measurement sensors. However, perfect information about transition or output probabilities is often unattainable due to parametric variation, unmodeled dynamics or external disturbances. In other words, there must be some model errors in engineering practice. This error can lead to significant accuracy degradation in state estimation if not properly considered. To build an HMM with model mismatch, we have to distinguish two kinds of states: the real state and the virtual state. The former is an accurate yet unattainable description of the system. The latter is an artificial construct generated by nominal models and does not exist in the physical world. 
We use $\bar{\mathbf{x}}_t$ to represent the virtual state and  ${\mathbf{x}}_t$ to represent the real state.  
Likewise, the real measurement is denoted as $\mathbf{y}_t$, and its virtual counterpart, which is generated by the nominal output probability, is denoted as $\bar{\mathbf{y}}_t$. 
The HMM with model mismatch is depicted in Fig.~\ref{fig.HMM} and outlined in \eqref{eq.modified hmm} as follows:
\begin{subequations} \label{eq.modified hmm}
\begin{align}
&\mathbf{x}_0 \sim p_0(x_0),\nonumber
\\
&\bar{\mathbf{x}}_t \sim p(\bar{x}_t|x_{t-1}),\nonumber
\\
&\bar{\mathbf{y}}_t \sim p(\bar{y}_t|x_t),\label{eq.stochasticity}
\\
&d(\mathbf{x}_t, \bar{\mathbf{x}}_t) \leq \mathbf{r}_x ,\nonumber
\\ &d(\mathbf{y}_t, \bar{\mathbf{y}}_t) \leq \mathbf{r}_y. \label{eq.mismatch}
\end{align}
\end{subequations}
Here, $p(\bar{x}_t|{x}_{t-1})$ and $p(\bar{y}_t|{x}_{t})$ denote the nominal transition probability and nominal output probability respectively; $\mathbf{r}_x$ and $\mathbf{r}_y$ are the threshold random variables depicting the upper bound of model mismatch, with their distributions characterized by the cumulative distribution functions $F_x$ and $F_y$, respectively. Importantly, $\mathbf{r}_x$ is assumed to be independent of both $\mathbf{x}_t$ and $\bar{\mathbf{x}}_t$, and a similar independence assumption is made for $\mathbf{r}_y$ relative to $\mathbf{y}_t$ and $\bar{\mathbf{y}}_t$. This new form in \eqref{eq.modified hmm} is called uncertain hidden Markov model.

It is crucial to differentiate between the concepts of system stochasticity (see \eqref{eq.stochasticity}) and model mismatch (see \eqref{eq.mismatch}). The distinction hinges on the presence of known mathematical forms. System stochasticity can be accurately modeled using explicit distributions with associated parameters. In contrast, model mismatch refers to the inherent limitations and uncertainties in a model’s ability to represent the real system. Typically, this can only be quantified by an upper bound that reflects the extent to which the model deviates from reality. If we can only acquire the bound of system stochasticity, it inherently becomes a special case of model mismatch. Conversely, if the distribution of model mismatch is determined, it then becomes part of the system’s stochasticity. This distinction is pivotal in understanding and building uncertain HMM.

\begin{figure}
    \centering
    \includegraphics[width=1\linewidth]{ 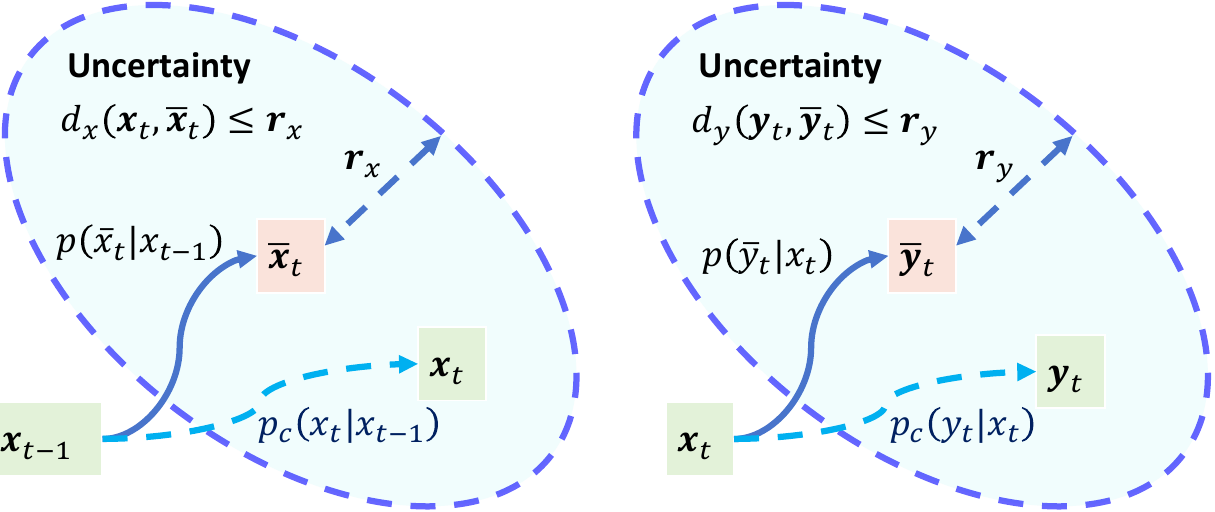}
    \caption{Illustration of uncertain HMM with model mismatch. The nominal transition probability in HMM projects the real state in the previous time, denoted as $\mathbf{x}_{t-1}$, to a model-predicted virtual state $\bar{\mathbf{x}}_t$. However, due to model mismatch, $\mathbf{x}_{t-1}$ transitions to the real state $x_t$ in the physical world. The transition from $\bar{\mathbf{x}}_t$ to $\mathbf{x}_t$ is inaccessible; instead, their distance $d(\mathbf{x}_t, \bar{\mathbf{x}}_t)$ are bounded by a
    threshold random variable $\mathbf{r}_x$, i.e., $d(\mathbf{x}_t, \bar{\mathbf{x}}_t) \leq \mathbf{r}_x$. Analogously,  $d(\mathbf{y}_t, \bar{\mathbf{y}}_t) \leq \mathbf{r}_y$ reflects the measurement model mismatch.}
    \label{fig.HMM}
\end{figure}

\begin{remark}
When the real system is perfectly modelled, i.e., $\mathbf{x}_t = \bar{\mathbf{x}}_t$, $\mathbf{y}_t = \bar{\mathbf{y}}_t$, we have $p(\bar{x}_t|x_{t-1}) = p(x_t|x_{t-1})$ and $p(\bar{y}_t|x_t) = p(y_t|x_t)$. In this case, the uncertain HMM \eqref{eq.modified hmm} reduces to the standard HMM  \eqref{eq.hmm}.
\end{remark}

\begin{remark}
The two nominal probabilities $p(\bar{x}_t|x_{t-1})$ and $p(\bar{y}_t|x_t)$ can be written to state space model (SSM):
\begin{equation}
\begin{aligned}
\bar{\mathbf{x}}_{t+1} = \bar{f}(\mathbf{x}_t, \bar{\mathbf{\xi}}_t),\\
\bar{\mathbf{y}}_t = \bar{g}(\mathbf{x}_t, \bar{\mathbf{\zeta}}_t).     
\end{aligned}
\end{equation}
Here, $\bar{f}$ is nominal transition model, $\bar{g}$ is nominal measurement model, $\bar{\xi}_t$ is virtual process noise, and $\bar{\zeta}_t$ is virtual measurement noise.
\end{remark}
In essence, HHM and SSM are just different representations of the same system model, and they can be converted into each other \cite{li2023reinforcement, cao2023generalized}. For example, consider the HMM's nominal transition probability expressed as \( p(\bar{x}_t|x_{t-1}) = \mathcal{N}(\bar{x}_t; Ax_{t-1}, Q) \). This can be equivalently represented in the SSM format as \( \bar{\mathbf{x}}_{t} = A\mathbf{x}_{t-1} + \bar{\mathbf{\xi}}_{t-1} \), where \( \bar{\mathbf{\xi}}_{t-1} \sim \mathcal{N}(\bar{\mathbf{\xi}}_{t-1}; 0, Q) \) with \( Q \) denoting the covariance matrix of the virtual process noise. Similarly, an HMM with a nominal transition probability defined by a Laplace distribution, \( p(\bar{x}_t|x_{t-1}) = \mathrm{Laplace}(\bar{x}_t; Ax_{t-1}, Q) \), can be represented in SSM format as \( \bar{\mathbf{x}}_{t} = A\mathbf{x}_{t-1} + \bar{\mathbf{\xi}}_{t-1} \), with \( \bar{\mathbf{\xi}}_{t-1} \) following the Laplace distribution \( \mathrm{Laplace}(\bar{\mathbf{\xi}}_{t-1}; 0, Q) \).

\subsection{Filtering Algorithm} 
When the system is perfectly modelled as in \eqref{eq.hmm}, Bayesian filtering serves as an ideal framework to calculate the posterior of system state by iteratively performing \eqref{eq.BF prediction} and  \eqref{eq.BF update}:
\begin{subequations}
\begin{align}
p(x_t|y_{1:t-1}) &= \int p(x_t|x_{t-1})p(x_{t-1}|y_{1:t-1}) \,\mathrm{d}x_{t-1}, \label{eq.BF prediction}
\\
p(x_t|y_{1:t}) &= \frac{p(x_t|y_{1:t-1})p(y_t|x_t)}{\int p(x_t|y_{1:t-1})p(y_t|x_t) \,\mathrm{d}x_{t}}. \label{eq.BF update}
\end{align}    
\end{subequations}
Here, $p(x_t|y_{1:t})$ is the posterior, $p(x_t|y_{1:t-1})$ is the prior.
As a tradition, \eqref{eq.BF prediction} is called prediction and \eqref{eq.BF update} is called update. These two equations originate from total probability rule and Bayes' law respectively. In fact, almost all existing Bayesian filtering algorithms adhere to this framework, with their differences lying in how these two equations are calculated.

When considering the HMM with model mismatch \eqref{eq.modified hmm}, we need to shift the mathematical foundation to convolutional rules, as demonstrated in \eqref{eq.convolutional rule}. First, let us redefine some core probability distributions in Bayesian filtering, including posterior distribution, prior distribution, transition probability, and output probability.  The redefinition relies on the uncertain HMM:
\begin{equation}\label{eq.definitions}
\begin{aligned}
p_c(x_t|y_{1:t}):&=
p(x_t | y_{1:t}, d_x(\mathbf{x}_i, \bar{\mathbf{x}}_i) \leq \mathbf{r}_x,  d_y(\mathbf{y}_i, \bar{\mathbf{y}}_i) \leq \mathbf{r}_y), 
\\
p_c(x_t|y_{1:t-1}):&=p(x_t | y_{1:t-1}, d_x(\mathbf{x}_i, \bar{\mathbf{x}}_i) \leq \mathbf{r}_x, d_y(\mathbf{y}_j, \bar{\mathbf{y}}_j) \leq \mathbf{r}_y),
\\
p_c(x_t|x_{t-1}):&= p(x_t|x_{t-1}, d_x(\mathbf{x}_t, \bar{\mathbf{x}}_{t}) \leq \mathbf{r}_x),
\\
p_c(y_t|x_{t}) :&= p(y_t|x_{t}, d_y(\mathbf{y}_t, \bar{\mathbf{y}}_{t}) \leq \mathbf{r}_y),
\\
i&= 1,2,...,t,\; j=1,2,...,t-1.
\end{aligned}  
\end{equation}
Here,  \( p_c(x_t|y_{1:t}) \), \( p_c(x_t|y_{1:t-1}) \),  \( p_c(x_t|x_{t-1}) \) and  $p_c(y_t|x_t)$ are convolutional distributions, each corresponding to their respective physical meanings. 
Then, we will illustrate how to use these definitions in \eqref{eq.definitions} to derive a new Bayesian filtering framework to handle model mismatch. We begin with the assumption of conditional independence:

\begin{assumption}[Conditional Independence]\label{assump.conditional independence x}
$\mathbf{x}_t$ and $\bar{\mathbf{x}}_t$ are conditionally independent given $\mathbf{x}_{t-1}$, i.e.,
\begin{equation}\nonumber
p(\bar{x}_t|x_t, x_{t-1}) = p(\bar{x}_t|x_{t-1}).  
\end{equation}
Besides, $\mathbf{y}_t$ and $\bar{\mathbf{y}}_t$ are conditionally independent given $\mathbf{x}_{t}$.
\end{assumption}

This assumption suggests that the virtual state can be inferred directly from the past state, without additional information from the current state. Also, the virtual measurement can be inferred directly from the state, without additional information from the real measurement. This assumption originates from the philosophical belief that the physical world and the modeling of a system are mutually exclusive at any given moment; that is, the act of modeling does not affect the system in the physical world, nor does the physical system influence its nominal model. This principle is crucial for estimating the transition and output probabilities using their nominal models. 
Under the assumption of conditionally independence, we can obtain the main result of the paper:

\begin{theorem}[Convolutional Bayesian filtering]
Under Assumption~\ref{assump.conditional independence x}, the convolutional Bayesian filtering is calculated recursively by \eqref{eq.convolutional Bayesian filtering prediction} and \eqref{eq.convolutional Bayesian filtering update}:
\begin{subequations}\label{eq.convolutional Bayesian filtering}
\begin{align}
p_c(x_t|y_{1:t-1}) 
&= \int p_c(x_t|x_{t-1})p_c(x_{t-1}|y_{1:t-1}) \,\mathrm{d}x_{t-1}, \label{eq.convolutional Bayesian filtering prediction}
\\
p_c(x_t|y_{1:t}) 
&= \frac{p_c(x_t|y_{1:t-1})p_c(y_t|x_t)}{\int p_c(x_t|y_{1:t-1})p_c(y_t|x_t) \,\mathrm{d}x_{t}}. \label{eq.convolutional Bayesian filtering update}   
\end{align}    
\end{subequations}
Here, the convolutional transition probability is
\begin{subequations}\label{eq.convolutional hmm probability}
\begin{align}
\label{eq.convolutional transition probability}
p_c(x_t|x_{t-1}) &= \frac{\int_{\bar{x}_t} \left(1-F_{x}(d_x(x_t, \bar{x}_t)) \right) p(\bar{x}_t|x_{t-1}) \, \mathrm{d} \bar{x}_t}{\int_{x_t} \int_{\bar{x}_t} \left(1-F_{x}(d_x(x_t, \bar{x}_t)) \right) p(\bar{x}_t|x_{t-1}) \, \mathrm{d} \bar{x}_t \, \mathrm{d} x_t},
\\
\intertext{and the convolutional output probability is}
\label{eq.convolutional output probability}
p_c(y_t|x_{t}) &= \frac{\int_{\bar{y}_t} \left(1-F_{y}(d_y(y_t, \bar{y}_t)) \right) p(\bar{y}_t|x_{t}) \, \mathrm{d} \bar{y}_t}{\int_{y_t} \int_{\bar{y}_t} \left(1-F_{y}(d_y(y_t, \bar{y}_t)) \right) p(\bar{y}_t|x_{t}) \, \mathrm{d} \bar{y}_t \, \mathrm{d}{y}_t}.
\end{align}
\end{subequations} 
\end{theorem}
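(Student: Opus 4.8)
The plan is to prove the theorem in two stages: first establish the closed forms \eqref{eq.convolutional transition probability} and \eqref{eq.convolutional output probability} for the convolutional transition and output probabilities, and then establish the recursion \eqref{eq.convolutional Bayesian filtering} itself. Stage one is essentially a direct application of the proposition that yields \eqref{eq.conditional probability}. For the transition probability I would identify $y \mapsto x_t$, $x \mapsto x_{t-1}$, $z \mapsto \bar{x}_t$, the metric $d \mapsto d_x$, and the threshold CDF $F \mapsto F_x$, and note that Assumption~\ref{assump.conditional independence x} supplies exactly the conditional independence of $\mathbf{x}_t$ and $\bar{\mathbf{x}}_t$ given $\mathbf{x}_{t-1}$ that the proposition requires; substituting into \eqref{eq.conditional probability} yields \eqref{eq.convolutional transition probability}. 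The output probability \eqref{eq.convolutional output probability} follows identically with $y\mapsto y_t$, $x\mapsto x_t$, $z\mapsto\bar{y}_t$, $d\mapsto d_y$, $F\mapsto F_y$, using the second half of Assumption~\ref{assump.conditional independence x}. This stage requires no new ideas.

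Stage two is to derive the recursion by mimicking the classical Bayes-filter argument, but carried out on the \emph{augmented} conditioning set in which the inequality events of \eqref{eq.definitions} are treated as part of the given information. Writing $A_i := \{d_x(\mathbf{x}_i,\bar{\mathbf{x}}_i)\le \mathbf{r}_x\}$ and $B_i := \{d_y(\mathbf{y}_i,\bar{\mathbf{y}}_i)\le \mathbf{r}_y\}$, the prior reads $p_c(x_t|y_{1:t-1}) = p(x_t | y_{1:t-1}, A_{1:t}, B_{1:t-1})$ and the posterior reads $p_c(x_t|y_{1:t}) = p(x_t | y_{1:t}, A_{1:t}, B_{1:t})$. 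For the prediction step I would apply the ordinary total probability rule by marginalizing over $x_{t-1}$ and then collapse the two resulting factors: the transition factor $p(x_t|x_{t-1}, y_{1:t-1}, A_{1:t}, B_{1:t-1})$ reduces to $p(x_t|x_{t-1}, A_t) = p_c(x_t|x_{t-1})$ by the Markov structure of the uncertain HMM together with the independence of $\mathbf{r}_x$, while the filtered factor $p(x_{t-1}|y_{1:t-1}, A_{1:t}, B_{1:t-1})$ must be shown to coincide with $p_c(x_{t-1}|y_{1:t-1})$, i.e.\ that the single extra event $A_t$ carries no information about $x_{t-1}$. For the update step I would apply ordinary Bayes' law to incorporate the pair $(y_t, B_t)$ against the prior, use conditional independence to replace $p(y_t, B_t | x_t, y_{1:t-1}, A_{1:t}, B_{1:t-1})$ by $p(y_t, B_t | x_t)$, and then identify $p(y_t, B_t | x_t)$ as proportional to $p_c(y_t|x_t)$ up to a factor constant in $x_t$.

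The main obstacle is the last point in each step: showing that the extra conditioning event is either uninformative (prediction) or factorizes cleanly (update). Both reduce to the same technical lemma, namely that the marginal probability of a mismatch event, e.g.\ $P(A_t|x_{t-1})$ or $P(B_t|x_t)$, is a \emph{constant} independent of the conditioning state. I expect this to follow from two ingredients already implicit in the framework: the uninformative prior $p_\text{pri}=C$ assigned to the real variable in the proof of the preceding proposition, and a translation-invariance property of the distance metric ensuring that $\int_{x_t}\left(1-F_x(d_x(x_t,\bar{x}_t))\right)\mathrm{d}x_t$ does not depend on $\bar{x}_t$ (as holds, for instance, for the quadratic metric used in Proposition~\ref{prop.special case}). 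Granting this lemma, the normalizing denominators of \eqref{eq.convolutional transition probability} and \eqref{eq.convolutional output probability} are likewise constants, the spurious factors $P(A_t|x_{t-1})$ and $P(B_t|x_t)$ cancel, and the recursion \eqref{eq.convolutional Bayesian filtering} falls out exactly as in the standard case. The delicate bookkeeping is therefore not the algebra of total probability and Bayes' law, which is routine, but the verification that conditioning on the mismatch inequalities preserves the Markov factorization on which the recursion relies.
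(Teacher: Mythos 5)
Your stage one coincides with the paper's proof: the paper likewise obtains \eqref{eq.convolutional transition probability} and \eqref{eq.convolutional output probability} by applying the proposition yielding \eqref{eq.conditional probability} to the nominal HMM probabilities, with Assumption~\ref{assump.conditional independence x} supplying the conditional independence the proposition requires. Your stage two, however, is a genuinely different route, and more ambitious than the paper's. The paper never derives the recursion from the augmented-conditioning definitions \eqref{eq.definitions}: its entire second step is the sentence ``by utilizing \eqref{eq.convolutional rule}, we can deduce \eqref{eq.convolutional Bayesian filtering}'', and \eqref{eq.convolutional rule} was itself introduced \emph{by construction}, i.e.\ by substituting $p_c$ for $p$ in the total probability rule and Bayes' law. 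In the paper, then, the recursion is essentially definitional --- standard Bayesian filtering run with the kernels \eqref{eq.convolutional hmm probability} --- and \eqref{eq.definitions} serves as interpretation rather than as the derivational starting point. You instead try to prove the stronger consistency statement that the conditional probabilities defined in \eqref{eq.definitions}, with their mismatched conditioning sets (the prior at time $t$ carries $A_{1:t}$ while the time-$(t-1)$ posterior carries only $A_{1:t-1}$), actually satisfy the recursion. That is a real question the paper leaves open, and your plan correctly isolates where it bites.

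As submitted, though, stage two has a concrete gap, which you yourself flag: the constancy lemma is only ``expected'', and it does not follow from the theorem's hypotheses. For a general metric $d_x$ and threshold CDF $F_x$, the quantity $\int_{x_t}\bigl(1-F_x(d_x(x_t,\bar{x}_t))\bigr)\,\mathrm{d}x_t$ can depend on $\bar{x}_t$, or be infinite; in that case $P(A_t|x_{t-1})$ is not constant, the extra event $A_t$ \emph{does} inform $x_{t-1}$ in the prediction step, and $p(y_t,B_t|x_t)$ is no longer proportional to the normalized $p_c(y_t|x_t)$ by an $x_t$-free factor in the update step. Translation invariance plus integrability of the kernel is thus an additional hypothesis --- satisfied by the quadratic-exponential instantiations of Proposition~\ref{prop.special case} and Corollary~\ref{corollary.analytical}, but absent from the theorem statement. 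There is also a foundational issue underneath: your ``ordinary total probability rule on the augmented conditioning set'' presupposes one coherent joint law over $(\mathbf{x}_{0:t},\bar{\mathbf{x}}_{1:t},\mathbf{y}_{1:t},\bar{\mathbf{y}}_{1:t},\mathbf{r}_x,\mathbf{r}_y)$, whereas the uncertain HMM \eqref{eq.modified hmm} specifies only the nominal conditionals and the inequality events --- the conditional of the real variable given the virtual one is precisely what is unknown, and the flat prior $p_\text{pri}=C$ in the proof of \eqref{eq.conditional probability} is improper. To complete your argument you must first posit the augmented joint (e.g.\ proportional to $C\,p(\bar{x}_t|x_{t-1})$ times the kernel and the threshold density), verify it normalizes, and check against it the Markov reductions you invoke, such as $p(x_t|x_{t-1},y_{1:t-1},A_{1:t},B_{1:t-1})=p(x_t|x_{t-1},A_t)$. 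None of this is fatal: under those added hypotheses your derivation closes and yields strictly more than the paper's proof, which avoids the lemma entirely only because it treats the recursion as holding by substitution.
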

\begin{proof}
The transition and output probabilities in \eqref{eq.hmm} are standard conditional probabilities. By leveraging the definition of convolutional conditional probability, we can derive the convolutional forms of \eqref{eq.hmm} as in \eqref{eq.convolutional hmm probability}.
Consequently, their convolutional counterparts are referred to as the convolutional transition probability \eqref{eq.convolutional transition probability} and the convolutional output probability \eqref{eq.convolutional output probability}, respectively. Then by utilizing \eqref{eq.convolutional rule}, we can deduce convolutional Bayesian filtering \eqref{eq.convolutional Bayesian filtering}.
\end{proof}

\begin{remark}
In the iterative process, \eqref{eq.convolutional Bayesian filtering prediction} and \eqref{eq.convolutional Bayesian filtering update} resemble the prediction and update steps of Bayesian filtering, respectively. The only difference is that all the probability distributions are transformed into their convolutional counterparts. Therefore, we refer to this iterative process as convolutional Bayesian filtering.
\end{remark}

\subsection{Analytical Form in Gaussian Cases}
A major challenge in convolutional Bayesian filtering is the difficulty in computing the integrals in  \eqref{eq.convolutional transition probability} and \eqref{eq.convolutional output probability}, as their analytical solutions generally do not exist.
However, an exceptional case arises when distance metrics are represented as quadratic forms, threshold distributions are chosen as exponential distributions, and virtual noises are characterized as additive Gaussian. In this specific case, it is possible to derive an analytical version of convolutional Bayesian filtering.

\begin{corollary}\label{corollary.analytical}
Consider the following nominal system model
\begin{equation}\label{eq.nonlinear Guassian}
\begin{aligned}
p(\bar{x}_t|x_{t-1}) &= \mathcal{N}(\bar{x}_t;f(x_{t-1}),Q),
\\
p(\bar{y}_t|x_t) &= \mathcal{N}(\bar{y}_t;g(x_t),R).
\end{aligned}    
\end{equation}
If $d_x(\mathbf{x}, \bar{\mathbf{x}}) = \|\mathbf{x} - \bar{\mathbf{x}} \|^2$,  $d_y(\mathbf{y}, \bar{\mathbf{y}}) = \|\mathbf{y} - \bar{\mathbf{y}} \|^2$, $\mathbf{r}_x \sim \mathrm{Exp}(\alpha)$ and $\mathbf{r}_y \sim \mathrm{Exp}(\beta)$ with $\alpha, \beta>0$ being exponential coefficients, we have
\begin{subequations}
\begin{align}
p_c(x_t|x_{t-1}) &= \mathcal{N}(x_t; f(x_{t-1}), Q + 1/(2\alpha) \cdot I_{n \times n}), \label{eq.analytical x}
\\
p_c(y_t|x_{t}) &= \mathcal{N}(y_t; g(x_t), R + 1/(2\beta) \cdot I_{m \times m)}.\label{eq.analytical y}
\end{align}
\end{subequations}
\end{corollary}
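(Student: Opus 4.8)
The plan is to prove the two claims by direct evaluation of the convolution integrals in \eqref{eq.convolutional transition probability} and \eqref{eq.convolutional output probability}. Since the two statements are structurally identical (swapping $x$ for $y$, $f$ for $g$, $Q$ for $R$, $\alpha$ for $\beta$, and $n$ for $m$), I would establish \eqref{eq.analytical x} in full and then invoke this symmetry for \eqref{eq.analytical y}.

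First I would substitute the exponential threshold into the kernel. Since $\mathbf{r}_x \sim \mathrm{Exp}(\alpha)$ has CDF $F_x(r) = 1 - e^{-\alpha r}$, the kernel factor becomes $1 - F_x(d_x(x_t, \bar{x}_t)) = e^{-\alpha \|x_t - \bar{x}_t\|^2}$. The crucial observation is that, viewed as a function of $\bar{x}_t$, this exponential is an unnormalized Gaussian centered at $x_t$; completing the normalization gives $e^{-\alpha \|x_t - \bar{x}_t\|^2} = (\pi/\alpha)^{n/2}\, \mathcal{N}(\bar{x}_t; x_t, \tfrac{1}{2\alpha} I_{n\times n})$. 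Substituting this together with $p(\bar{x}_t|x_{t-1}) = \mathcal{N}(\bar{x}_t; f(x_{t-1}), Q)$ turns the numerator of \eqref{eq.convolutional transition probability} into a constant times the integral of a product of two Gaussians in $\bar{x}_t$.

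The main step, which carries the whole result, is the Gaussian marginalization identity $\int \mathcal{N}(\bar{x}; a, A)\, \mathcal{N}(\bar{x}; b, B) \,\mathrm{d}\bar{x} = \mathcal{N}(a; b, A+B)$. Applying it with $a = x_t$, $A = \tfrac{1}{2\alpha} I_{n\times n}$, $b = f(x_{t-1})$, and $B = Q$ collapses the integral to $(\pi/\alpha)^{n/2}\, \mathcal{N}(x_t; f(x_{t-1}), Q + \tfrac{1}{2\alpha} I_{n\times n})$. This is precisely where the inequality threshold manifests as an additive inflation of the covariance by $\tfrac{1}{2\alpha} I_{n\times n}$, so verifying this identity (or deriving it by completing the square in the joint exponent) is the one nontrivial calculation.

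Finally I would handle the normalization. The denominator of \eqref{eq.convolutional transition probability} is exactly the integral of the numerator over $x_t$; because the numerator is a properly normalized Gaussian in $x_t$ up to the factor $(\pi/\alpha)^{n/2}$, that factor cancels and the denominator equals $(\pi/\alpha)^{n/2}$. Dividing then yields \eqref{eq.analytical x} with no leftover constants, and the same four steps give \eqref{eq.analytical y}. I do not anticipate a genuine obstacle beyond keeping the normalizing constants straight; the only subtlety is confirming that the factor $(\pi/\alpha)^{n/2}$ is independent of both $x_{t-1}$ and $x_t$ (it arises solely from the kernel) so that it truly cancels in the ratio.
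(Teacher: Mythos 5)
Your proposal is correct and follows essentially the same route as the paper: both directly evaluate the convolution integral $\int e^{-\alpha\|x_t-\bar{x}_t\|^2}\,\mathcal{N}(\bar{x}_t; f(x_{t-1}),Q)\,\mathrm{d}\bar{x}_t$, with your invocation of the Gaussian marginalization identity $\int \mathcal{N}(\bar{x};a,A)\,\mathcal{N}(\bar{x};b,B)\,\mathrm{d}\bar{x} = \mathcal{N}(a;b,A+B)$ being just a packaged form of the paper's explicit completing-the-square computation in \eqref{eq.completing square}. Your handling of the normalization is in fact slightly more careful than the paper's, which works only up to proportionality, but this is a presentational difference rather than a different argument.
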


\begin{proof}
The proof is provided only for the first part, namely, proving the analytical form of $p_c(x_t|x_{t-1})$ in \eqref{eq.analytical x}. The second part \eqref{eq.analytical y}  can be proved in a similar manner. According to \eqref{eq.convolutional transition probability}, we have
\begin{equation}\nonumber
\begin{aligned}
& p_c(x_t|x_{t-1})
\\
\propto & \int_{\bar{x}_t} \left(1-F_{x}(d_x(x_t, \bar{x}_t)) \right) p(\bar{x}_t|x_{t-1}) \, \mathrm{d} \bar{x}_t, \\
= & \int_{\bar{x}_t} e^{-\alpha \|x_t - \bar{x}_t\|^2} e^{-\frac{1}{2}\left\|\bar{x}_t - f(x_{t-1})\right\|^2_{Q^{-1}} }
\, \mathrm{d} \bar{x}_t.
\end{aligned}
\end{equation}
By completing the square, we have
\begin{equation}\label{eq.completing square}
\begin{aligned}
&\alpha \|x_t - \bar{x}_t\|^2 + \frac{1}{2}\left(\bar{x}_t - f(x_{t-1})\right)^{\top}Q^{-1}\left(\bar{x}_t - f(x_{t-1})\right) \\
=& \frac{1}{2} \Big(
\bar{x}_t^{\top}(2\alpha I_{n \times n} + Q^{-1})\bar{x}_t \\
&-2\bar{x}_t^{\top}(2\alpha I_{n \times n}x_t + Q^{-1}f(x_{t-1})) + C
\Big) \\
=& \frac{1}{2} \Big(
\|\bar{x}_t - (2\alpha I_{n \times n} 
\\
&+ Q^{-1})^{-1}(2\alpha x_t + Q^{-1}f(x_{t-1}))\|^2_{2\alpha I_{n \times n} + Q^{-1}} + C
\Big).
\end{aligned}
\end{equation}
where \( C \) indicates terms that do not depend on \( \bar{x}_t \).
The integral of  \eqref{eq.completing square} over \( \bar{x}_t \) is proportional to
\begin{equation}\nonumber
e^{-\frac{1}{2}(x_t - f(x_{t-1}))^\top (1/(2\alpha) \cdot I_{n \times n} + Q)^{-1} (x_t - f(x_{t-1}))},
\end{equation}
where \( 1/(2\alpha) \cdot I_{n \times n} + Q \) is the covariance matrix of the convolutional transition probability and \( f(x_{t-1}) \) is its mean. This results in an analytical form of the convolutional transition probability:
\begin{equation}\nonumber
p_c(x_t|x_{t-1}) = \mathcal{N}(x_t; f(x_{t-1}), Q + 1/(2\alpha) \cdot I_{n \times n}).
\end{equation}
\end{proof}

This corollary shows that by using quadratic distance metrics and choosing exponential threshold variables, the covariance matrix of the convolutional transition probability for system \eqref{eq.nonlinear Guassian} essentially becomes the nominal covariance matrix plus a constant matrix related to the exponential coefficient. As the exponential coefficient
increases, the exponential distribution becomes more concentrated, with its mean and variance tending towards zero. This implies that the uncertain HMM becomes increasingly deterministic. When the exponential coefficient becomes infinity, the effect of model mismatch diminishes, and the convolutional transition probability $p_c(x_t|x_{t-1})$ reduces to the nominal transition  probability $p(x_t|x_{t-1})$. This analysis is equally applicable to 
convolutional output probability and nominal output probability.

For linear Gaussian case, where the system in \eqref{eq.nonlinear Guassian} satisfies $f(x_{t-1}) = Ax_{t-1}$ and $g(x_t) = Cx_t$, standard Bayesian filtering simplifies to the canonical Kalman filter. By using Corollary~\ref{corollary.analytical}, the canonical Kalman filter can be transformed into its convolutional version by only replacing the covariance matrix of process noise $Q$ with $Q + 1/{(2\alpha)}$, and the covariance matrix of measurement noise $R$ with $R + 1/{(2\beta)}$. 

\begin{remark}
The resulting method is an outlier-robust Kalman filter (KF), which we name as convolutional KF (ConvKF). Unlike the robust regression KF that employs Huber loss \cite{huber2004robust} or correntropy loss \cite{chen2017maximum,tao2023outlier,tao2023maximum}, and the student-t KF \cite{roth2013student, agamennoni2012approximate} designed for handling non-Gaussian heavy-tailed distributions, ConKF offers several benefits: First, it quantitatively considers the impact of model mismatch with a clear probabilistic meaning; second, it preserves the original structure of  KF, maintaining the conjugate nature of Gaussian distributions without increasing the computational burden; third, our method is in alignment with the well-established results for engineering practice of the KF, as discussed in Chapter 6.1 of \cite{anderson2012optimal} and Theorem 7.6 of \cite{jazwinski2007stochastic}: if the modeling of noise covariance is imprecise, it is common practice to opt for a larger covariance in the KF. This treatment is proven to preserve stability, albeit resulting in a more conservative filter.
\end{remark}

\section{Approximation via Exponential Density Rescaling}\label{sec.theoretical approximation}
Except for the Gaussian case addressed in Corollary \ref{corollary.analytical}, the convolutional conditional probabilities in Bayesian filtering typically lack analytical forms. In this section, we introduce an approximation technique for computing the convolutional conditional probability, namely the exponential density rescaling technique. Moreover, we offer a theoretical explanation for this technique using the theory of information bottleneck.

\subsection{Exponential Density Rescaling}
When the
distance metric is defined
in terms of relative entropy, the transition probability and
output probability can be approximated by simply reformulating them
into exponential forms with fractional powers. Specifically, we have the following theorem:
\begin{theorem}[Exponential Approximation]\label{theorem3.power approximation}
When the distance metrics $d_x$ and $d_y$ are chosen as
\begin{equation}\nonumber
d_x(\mathbf{x}_t, \bar{\mathbf{x}}_t) = D_{\mathrm{KL}}(\hat{p}_{\mathbf{x}_t}|| \hat{p}_{\bar{\mathbf{x}}_t}),\; d_y(\mathbf{y}_t, \bar{\mathbf{y}}_t) = D_{\mathrm{KL}}(\hat{p}_{\mathbf{y}_t}|| \hat{p}_{\bar{\mathbf{y}}_t}),
\end{equation} 
with \(\hat{p}_{\mathbf{x}_t}(x) = \delta(x-\mathbf{x}_t)\) and \(\hat{p}_{\mathbf{y}_t}(y) = \delta(y-\mathbf{y}_t)\) representing the empirical distribution,
and \(\mathbf{r}_x \sim \mathrm{Exp}(\alpha)\), \(\mathbf{r}_y \sim \mathrm{Exp}(\beta)\) with $\alpha, \beta >0$, the convolutional transition probability and convolutional output probability can be approximated as
\begin{equation}\nonumber
\begin{aligned}
p_c(x_t|x_{t-1}) &\approx \frac{p(x_t|x_{t-1})^{\frac{\alpha}{\alpha+1}}}{\int p(x_t|x_{t-1})^{\frac{\alpha}{\alpha+1}} \, \mathrm{d} x_t} := p_e (x_t|x_{t-1}),
\\
p_c(y_t|x_{t}) &\approx \frac{p(y_t|x_{t})^{\frac{\beta}{\beta+1}}}{\int p(y_t|x_{t})^{\frac{\beta}{\beta+1}} \, \mathrm{d} y_t} \,\,\,\,\,\,\,:= p_e (y_t|x_t).
\end{aligned}  
\end{equation}
\end{theorem}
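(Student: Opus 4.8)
The plan is to derive both approximations from the closed forms of the convolutional transition and output probabilities in \eqref{eq.convolutional hmm probability}, following the template of the proof of Corollary~\ref{corollary.analytical}: since the transition and output cases are structurally identical, I would establish the claim for $p_c(x_t|x_{t-1})$ and obtain $p_c(y_t|x_t)$ by relabeling. First I would substitute the exponential threshold. With $\mathbf{r}_x\sim\mathrm{Exp}(\alpha)$ the survival function is $1-F_x(r)=e^{-\alpha r}$, so \eqref{eq.convolutional transition probability} reduces to $p_c(x_t|x_{t-1})\propto\int_{\bar x_t} e^{-\alpha D_{\mathrm{KL}}(\hat p_{\mathbf{x}_t}\|\hat p_{\bar{\mathbf{x}}_t})}\,p(\bar x_t|x_{t-1})\,\mathrm{d}\bar x_t$, which turns the generic kernel $1-F_x$ into a Boltzmann-type weight whose temperature is controlled by $\alpha$.

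Next I would inject the distributional distance, and here lies the delicate point. Evaluating $D_{\mathrm{KL}}(\hat p_{\mathbf{x}_t}\|\hat p_{\bar{\mathbf{x}}_t})$ literally for the empirical (Dirac) distributions $\hat p_{\mathbf{x}_t}=\delta(\cdot-x_t)$ and $\hat p_{\bar{\mathbf{x}}_t}=\delta(\cdot-\bar x_t)$ is degenerate: the divergence is $0$ when $x_t=\bar x_t$ and $+\infty$ otherwise, which collapses the kernel to a Dirac delta and merely returns the nominal density, i.e.\ the limit already captured by Proposition~\ref{prop.special case}. To recover a nontrivial rescaling I would instead read the weight \emph{in a distributional sense}: treat $e^{-\alpha D_{\mathrm{KL}}(\cdot\|\cdot)}$ as the tempering factor of a variational free energy rather than as a pointwise function, passing the single-sample empirical measure over to the predictive distribution it represents.

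The key step is then a geometric-mixture identity. I would show that the normalized convolution coincides with the minimizer over densities $q$ of an entropy-regularized objective of the form $\alpha\,D_{\mathrm{KL}}\bigl(q\,\|\,p(\cdot|x_{t-1})\bigr)+D_{\mathrm{KL}}(q\,\|\,\mathrm{Unif})$; setting the first variation to zero under a normalization multiplier gives $q\propto p(\cdot|x_{t-1})^{\alpha/(\alpha+1)}$. This is precisely where the exponent $\alpha/(\alpha+1)$ originates: the $+1$ in the denominator is the contribution of the self-entropy (uniform-reference) term, and it is exactly what separates the fractional power from the naive exponent $\alpha$ that a literal plug-in of the surrogate would produce. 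Normalizing over $x_t$ yields $p_e(x_t|x_{t-1})$, and repeating the argument verbatim with $\mathbf{r}_y\sim\mathrm{Exp}(\beta)$ delivers $p_e(y_t|x_t)$.

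The main obstacle is the degeneracy of the KL divergence between empirical distributions: the entire approximation hinges on replacing the formally ill-defined pointwise KL by its distributional/variational surrogate and on justifying that this surrogate is the one consistent with the Boltzmann weighting in \eqref{eq.convolutional transition probability}. I expect the cleanest justification to route through the information-bottleneck characterization foreshadowed in the introduction, identifying $\alpha$ (resp.\ $\beta$) with the Lagrange multiplier that trades fidelity to the nominal model against compression of the representation; the remaining subtle task is to quantify the error of this surrogate, i.e.\ to bound the gap between the exact $p_c$ and the rescaled $p_e$, which is why the statement is phrased as an approximation rather than an identity.
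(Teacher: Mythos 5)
Your setup is right (with $\mathbf{r}_y \sim \mathrm{Exp}(\beta)$ the kernel $1-F_y$ becomes the Boltzmann weight $e^{-\beta D_{\mathrm{KL}}}$, and the convolution is an expectation over $\bar{\mathbf{y}}_t \sim p(\bar y_t|x_t)$), and your diagnosis of the delta-versus-delta degeneracy of $D_{\mathrm{KL}}(\hat p_{\mathbf{y}_t}\|\hat p_{\bar{\mathbf{y}}_t})$ is exactly the difficulty the paper must confront. But the core of your argument --- the claim that the normalized convolution ``coincides with'' the minimizer of $\alpha\, D_{\mathrm{KL}}\bigl(q\,\|\,p(\cdot|x_{t-1})\bigr) + D_{\mathrm{KL}}(q\,\|\,\mathrm{Unif})$ --- is an ansatz reverse-engineered from the answer, not a derivation. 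Nothing in your proposal explains why the reference-entropy term enters with coefficient exactly $1$ (equivalently, where the ``$+1$'' in $\alpha/(\alpha+1)$ comes from). The information-bottleneck route you invoke cannot supply it either: the paper's IB section shows only that a Lagrange multiplier $\gamma$ yields a power posterior $p(y_t|x_t)^{\gamma}$ for \emph{arbitrary} $\gamma$; the identification $\gamma = \beta/(\beta+1)$ is imported from Theorem~\ref{theorem3.power approximation} itself, so justifying the theorem through IB would be circular.

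The paper's proof fills this hole with a concrete approximation chain on a finite sample space: (i) the second-order Taylor expansion $D_{\mathrm{KL}}(p\|q) \approx \tfrac{1}{2}\chi^{2}(p,q)$ together with the quadratic-form representation $\chi^{2}(p,q) = (p'-q')^{\top} C^{-1}(q')(p'-q')$; (ii) a CLT step that models the \emph{random} empirical distribution $\hat p_{\bar{\mathbf{y}}_t}$ as a Gaussian $\mathcal{N}(\hat q;\, q,\, C(q))$ centered at the nominal distribution $q = p(\bar y_t|x_t)$ --- this is what de-degenerates the kernel, since the expectation is then taken over the law of this Gaussian surrogate rather than evaluated pointwise at a delta; (iii) a Gaussian convolution in which the kernel covariance $C/\beta$ and the CLT covariance $C$ \emph{add}, giving $(1/\beta + 1)C$ --- this sum of covariances, not a self-entropy term, is the actual origin of the tempering $\beta \mapsto \beta/(\beta+1)$; and (iv) the point-mass identity $D_{\mathrm{KL}}(\hat p_{y_t}\|q) = -\log q(y_t)$, which converts $e^{-\frac{\beta}{\beta+1} D_{\mathrm{KL}}}$ into the fractional power $p(y_t|x_t)^{\beta/(\beta+1)}$. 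Your variational objective does have the correct minimizer ($q \propto p^{\alpha/(\alpha+1)}$), so your plan lands on the right formula, but as written it posits the key identity rather than proving it; to close the gap you would need steps (i)--(iii), or an equivalent asymptotic argument that pins down the unit coefficient of the reference term, which you yourself flag as unfinished.
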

\begin{proof}
We prove the theorem only for the convolutional output probability, because the proof logic for the convolutional transition probability is analogous and thus omitted. Besides, the proof is confined to cases where the sample space is finite.
Supposing the sample space has $k$ elements, we define the set of all the probability distributions as $$\Delta_k = \left\{ p \in \mathbb{R}^k: p = [p_1, p_2, \ldots, p_k], \sum_i p_i = 1, p_i>0, \forall i \right\}.$$ 
By Proposition \ref{prop.convolutional conditional probability}, we have
\begin{equation}\nonumber
p_c(y_t|x_t) \propto P(d_{y}(\mathbf{y}_t, \bar{\mathbf{y}}_t)\leq \mathbf{r}_y|y_t, x_{t}).     
\end{equation}
According to \eqref{eq.proof likelihood}, we have
\begin{equation}\nonumber
\begin{aligned}
&P(d_{y}(\mathbf{y}_t, \bar{\mathbf{y}}_t) \leq \mathbf{r}_y|y_t, x_{t}) 
\\
=& \int_{\bar{y}_t} \left(1-F_{y}(d_{y}(y_t, \bar{y}_t)) \right) p(\bar{y}_t|x_{t}) \, \mathrm{d} \bar{y}_t
\\
=& \int_{\bar{y}_t} e^{-\beta(D_{\text{KL}}(\hat{p}_{y_t} || \hat{p}_{\bar{{y}}_t})) } p(\bar{y}_t|x_{t}) \, \mathrm{d} \bar{y}_t
\\
=& \mathbb{E}_{\bar{\mathbf{y}}_t \sim p(\bar{y}_t|x_t)}\left\{ e^{-\beta(D_{\text{KL}}(\hat{p}_{y_t} || \hat{p}_{\bar{\mathbf{y}}_t})) } \right\}.
\end{aligned}
\end{equation}
For any $o\in \mathbb{R}^k$, we define $o' := (o_1, o_2, \ldots, o_{k-1}) \in \mathbb{R}^{k-1}$ and  $C(o) \in \mathbb{R}^{k \times k}$ satisfying $[C(o)]_{(i,j)} = o_i \mathds{1}_{(i=j)} - o_i o_j$. Under the assumption of finite sample space, we denote \(\mathbf{\hat{q}} = \hat{p}_{\bar{\mathbf{y}}_t} \in \Delta_{k}\) and $p = \hat{p}_{{y}_t} \in \Delta_{k}$. Thus, we have
\begin{equation}\nonumber
\begin{aligned}
\mathbb{E} \left\{ e^{-\beta(D_{\text{KL}}(\hat{p}_{y_t} || \hat{p}_{\bar{\mathbf{y}}_t}))} \right\}
=  \mathbb{E} \left\{ e^{-\beta(D_{\text{KL}}(p || \mathbf{\hat{q}}))} \right\}.
\end{aligned}
\end{equation}
The chi-squared distance is a second-order Taylor approximation of the relative entropy \cite{cover1999elements}. That is to say, for any \(p, q \in \Delta_k\), we have
\begin{equation}\label{eq.2nd order approximation}
D_{\text{KL}}(p || q) = \frac{1}{2}\chi^2(p, q) + O(\|p-q\|^2),
\end{equation}
where \(D_{\text{KL}}(p || q):= \sum_i p_i \log(p_i/q_i)\) and \(\chi^2(p,q):= \sum_i (p_i-q_i)^2/{q_i}\). Besides, for all \( p, q \in \Delta_k \), the chi-squared distance \( \chi^2(p,q) \) equals
\begin{equation}\label{eq.chi square}
\chi^2(p,q) = (p'-q')^\top C^{-1}(q') (p'-q'),
\end{equation}
where $p' := (p_1, p_2, \ldots, p_{k-1}) \in \mathbb{R}^{k-1}$ \cite{cover1999elements}.
According to the central limit theorem \cite{kwak2017central}, the empirical distribution can be approximated by a Gaussian distribution \(\hat{\mathbf{q}} \sim \mathcal{N}(\hat{q}; q, C(q))\) \cite{miller2018robust} with $q = p(\Bar{y}_t|x_t)$ being the nominal distribution. 
Defining \(C = C(q')\), by \eqref{eq.2nd order approximation} and \eqref{eq.chi square}, we have
\begin{equation}\nonumber
\begin{aligned}
& \mathbb{E} \left\{ e^{-\beta(D_{\text{KL}}(p || \mathbf{\hat{q}}))} \right\}
\\
\approx & \mathbb{E} \left\{e^{-\frac{\beta}{2}(p'-\hat{\mathbf{q}}')^{\top}C^{-1}(p'-\hat{\mathbf{q}}')}  \right\}
\\ 
= & (2\pi)^{\frac{k-1}{2}} |C/\beta|^{\frac{1}{2}} \int \mathcal{N}(p';\hat{q}', C/\beta) \mathcal{N}(\hat{q}';q', C) \, \mathrm{d} \hat{q}' 
\\
= & (2\pi)^{\frac{k-1}{2}} |C/\beta|^{\frac{1}{2}} \mathcal{N}(p';q', (\frac{1}{\beta}+ 1)C)
\\
= & (\frac{1}{1+ \beta})^{\frac{k-1}{2}} e^{-\frac{1}{2} (\frac{1}{\beta}+ 1)^{-1}(p'-q')^{\top}C^{-1}(p'-q') }
\\
\approx & (\frac{1}{1+ \beta})^{\frac{k-1}{2}} e^{-\frac{\beta}{\beta+1} D_{\text{KL}}(p || q) }
\\
= & (\frac{1}{1+ \beta})^{\frac{k-1}{2}} e^{-\frac{\beta}{\beta+1} D_{\text{KL}}(\hat{p}_{y_t} || p(\bar{y}_t|x_t)) }
\\
\propto &  p({y}_t|x_t)^{\frac{\beta}{\beta+1}}.
\end{aligned}
\end{equation}
Thus, we finally achieve $p_c(y_t|x_t) \propto p({y}_t|x_t)^{\frac{\beta}{\beta+1}}$.
\end{proof}
Based on the second order approximation of the relative entropy, this theorem provides an effective way of performing convolutional Bayesian filtering by simply transforming  transition probability and output probability using their fractional orders.

\begin{remark}
In linear Gaussian systems, the proposed approximation alters the covariances for both transition and measurement noises. Consider, for example, the nominal transition probability $p(\bar{x}_t|x_{t-1}) = \mathcal{N}(\bar{x}_t; Ax_{t-1}, Q)$; this becomes $p_e({x}_t|x_{t-1}) = \mathcal{N}(\bar{x}_t; Ax_{t-1}, (\alpha+1)/\alpha \cdot Q)$, thereby changing the covariance of the transition noise from $Q$ to $(\alpha+1)/\alpha \cdot Q$. Such a modification aligns with the guidelines of Corollary~\ref{corollary.analytical}. However, it is crucial to emphasize that while Corollary~\ref{corollary.analytical} is confined to Gaussian distributions, Theorem~\ref{theorem3.power approximation} broadens the scope to include any type of distribution. The extensive applicability of this approach is also reflected in the convolutional particle filter algorithm (see Algorithm~\ref{alg.ConvPF}), which is formulated without being limited to any specific distribution type.
\end{remark}

\begin{algorithm}[!t]
\caption{Convolutional Particle Filter with Exponential Density Rescaling}
\label{alg.ConvPF}
\begin{algorithmic}[1]
\REQUIRE Sequence of measurements $y_{1:t}$, nominal transition probability $p(x_t|x_{t-1})$, nominal output probability $p(y_t|x_t)$, number of particles $N$.
\FOR{$t = 0$ \TO $T$}
    \STATE \textbf{Prediction:}
    \FOR{$i = 1$ \TO $N$}
        \STATE 
        $x_t^{(i)} \sim p_e(x_t|x_{t-1}^{(i)})$
    \ENDFOR
    \STATE \textbf{Update:}
    \FOR{$i = 1$ \TO $N$}
        \STATE 
        $w^{i} = p_e(y_t|x_t^{(i)})$
    \ENDFOR
    \STATE Normalize weights: $w \leftarrow w / \sum(w)$
    \STATE \textbf{Resampling:}
    \STATE Select $N$ particles based on weights $w$ for new set
    \STATE Replace current particles with the new set
\ENDFOR
\ENSURE Final set of particles: $x^{(1:N)}$
\end{algorithmic}
\end{algorithm}

\subsection{Connection with Information Bottleneck Theory}
Previously, we have proved that convolutional Bayesian filtering can be approximated by exponential density rescaling technique. This section will provide a theoretical view of this technique using the information bottleneck theory.


Given the measurement data $\mathbf{y}_t$, the state $\mathbf{x}_t$ can be regarded as its compressed representation. Leveraging the information bottleneck theory \cite{tishby2000information,bialek2001predictability}, we can express the information bottleneck objective as
\begin{equation}\label{eq.IB constrained}
\begin{aligned}
q_\text{info}(x_t|y_t) =& \argmin_{q(x_t|y_t)}\left\{-I(\mathbf{x}_t, \bar{\mathbf{y}}_t) \right\} ],
\\
\text{s.t.} \;& I\left(\mathbf{x}_t, \mathbf{y}_t \right) \leq I_0,
\end{aligned}  
\end{equation}
where $I(\mathbf{x}, \mathbf{y}) = D_{\text{KL}}\left(p(x,y) \| p(x) p(y)\right)$ is defined as the mutual information between random variables $\mathbf{x}$ and $\mathbf{y}$. Here, the mutual information $I(\mathbf{x}_t, \mathbf{y}_t)$ and $I(\mathbf{x}_t, \mathbf{\bar{y}}_t)$ are defined by two joint probability distributions $p(x_t,y_t)$ and $p(x_t,\bar{y}_t)$, which can be decomposed as
\begin{equation}\nonumber
\begin{aligned}
p(x_t,y_t) &= q(x_t|y_t)p(y_t) = q(y_t|x_t)p(x_t), \\ p(x_t,\bar{y}_t) &= p(x_t|\bar{y}_t)p(\bar{y}_t).   
\end{aligned}    
\end{equation}
The goal of \eqref{eq.IB constrained} is to maximize the mutual information between the virtual measurement and its compression, the system state, while ensuring that the mutual information between the state and the actual measurement does not exceed $I_0$. The concept of the ``information bottleneck" emerges from the limitation that $I(\mathbf{x}_t, \mathbf{y}_t)$ must not exceed $I_0$, which requires compressing the information in $\mathbf{y}_t$ through a bottleneck, as depicted in Fig.~\ref{fig.information bottleneck1}.

The constrained optimization problem in \eqref{eq.IB constrained} can be transformed into an unconstrained optimization problem by using the Lagrange multiplier $1-\gamma$:
\begin{equation}\label{eq.unconstrained}
q_{\text{info}}(x_t|y_t) = \argmin_{q(x_t|y_t)} \left\{ 
-I(\mathbf{x}_t, \bar{\mathbf{y}}_t) + (1- \gamma) I(\mathbf{x}_t, \mathbf{y}_t)
\right\}.    
\end{equation}
\begin{figure}[!t]
\centering
\begin{subfigure}[b]{0.33\textwidth}
\includegraphics[width=\textwidth]{ 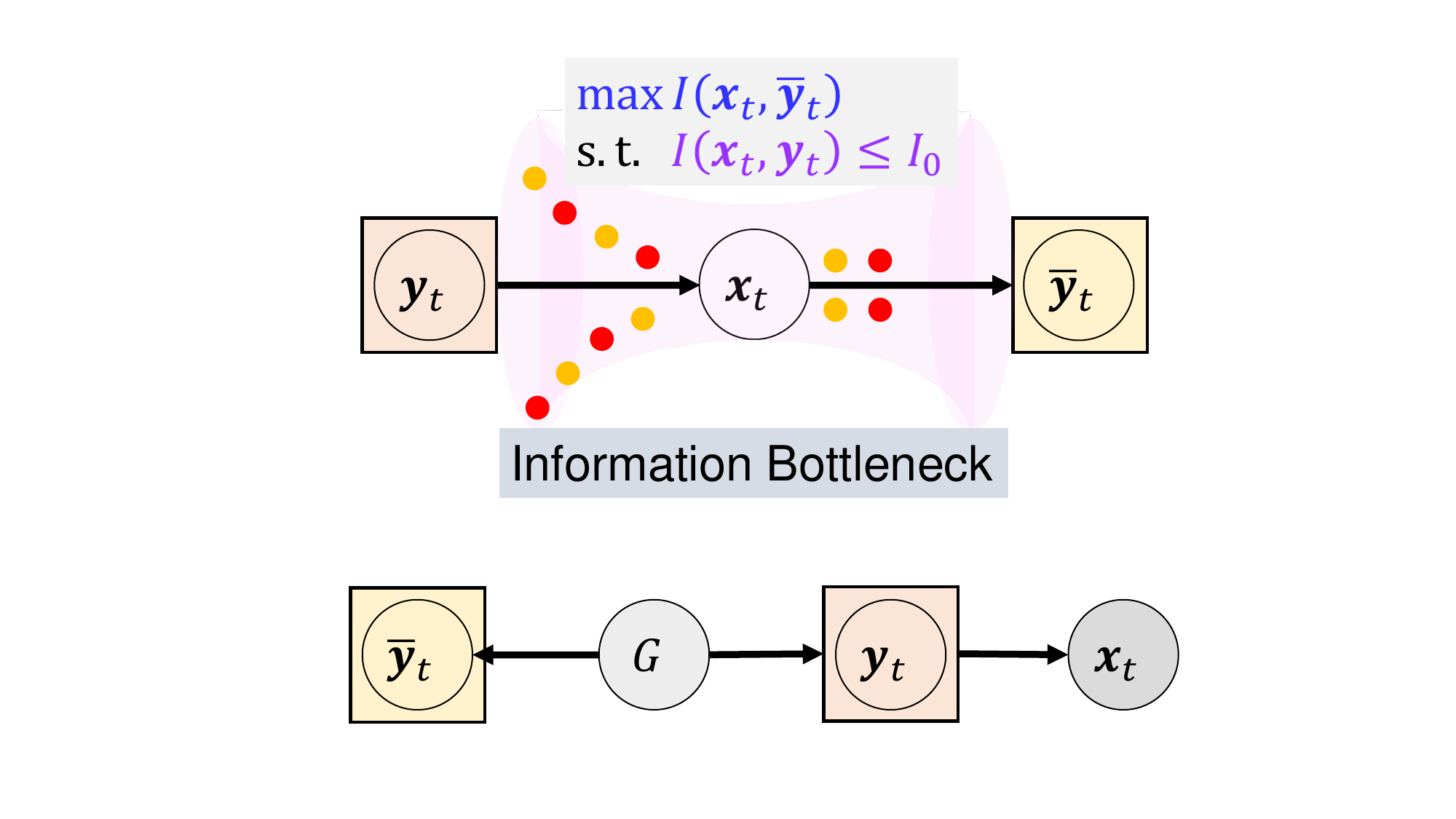}
\caption{}
\label{fig.information bottleneck1}
\end{subfigure}
\hfill
\begin{subfigure}[b]{0.33\textwidth}
\includegraphics[width=\textwidth]{ 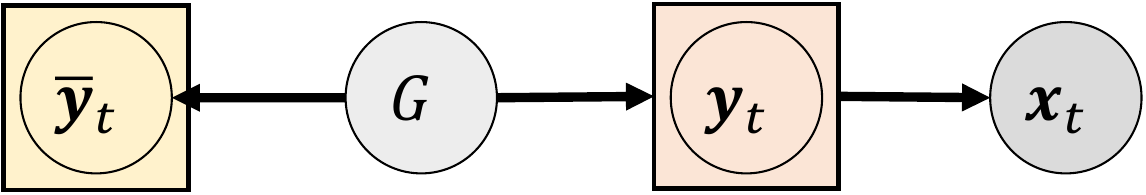}
\caption{}
\label{fig.information bottleneck2}
\end{subfigure}
\caption{(a) Illustration of information bottleneck. (b) Markov chain of the information bottleneck. Note that $G$ represents the data generator for $\mathbf{y}_t$ and $\bar{\mathbf{y}}_t$.} 
\label{fig.information bottleneck}
\end{figure}
By leveraging the Markov property (see Fig.  \ref{fig.information bottleneck2}) \cite{alemi2020variational,knoblauch2022optimization}, \eqref{eq.unconstrained} can further be  rewritten as
\begin{equation}\label{eq.IB unconstrained}
\begin{aligned}
q_{\text{info}}(x_t|y_t) 
=\argmin_{q(x_t|y_t)} \left\{I(\mathbf{x}_t,\mathbf{y}_t  | \bar{\mathbf{y}}_t) - \gamma \cdot I(\mathbf{x}_t, \mathbf{y}_t)
\right\}.    
\end{aligned}
\end{equation}
We can find an approximate upper bound of \eqref{eq.IB unconstrained}. For the first term, we have:
\begin{equation}\label{eq.IB upper bound}
\begin{aligned}
&I(\mathbf{x}_t,\mathbf{y}_t|\bar{\mathbf{y}}_t) 
\\
=& \mathbb{E}_{p(y_{t}, \bar{y}_{t})} \left\{\mathbb{E}_{q(x_t|y_{t})} \left\{\log \left(\frac{{q(\mathbf{x}_t|\mathbf{y}_t)p(\mathbf{y}_t|{\bar{\mathbf{y}}}_t)}}{p(\mathbf{x}_t|\mathbf{\bar{y}}_t)p(\mathbf{y}_t|{\bar{\mathbf{y}}}_t)} \right)
\right\}\right\} 
\\
=& \mathbb{E}_{p(y_{t}, \bar{y}_{t})} \left\{\mathbb{E}_{q(x_t|y_{t})} \left\{\log \left(\frac{{q(\mathbf{x}_t|\mathbf{y}_t)}}{p(\mathbf{x}_t|\mathbf{\bar{y}}_t)} \right) \right\}\right\} 
\\
=& \mathbb{E}_{p(y_{t})} \left\{\mathbb{E}_{q(x_t|y_{t})} \left\{\log \left(\frac{{q(\mathbf{x}_t|\mathbf{y}_t)}}{p_c(\mathbf{x}_t|y_{1:t-1})} \right) \right\}\right\}
\\
&-\mathbb{E}_{p(\bar{y}_{t})} \left\{\mathbb{E}_{p(x_t|\bar{y}_{t})} \left\{\log \left(\frac{{p(\mathbf{x}_t|\bar{\mathbf{y}}_t)}}{p_c(\mathbf{x}_t|y_{1:t-1})} \right) \right\}\right\}
\\
=& \mathbb{E}_{p(y_{t})} \left\{\mathbb{E}_{q(x_t|y_{t})} \left\{\log \left(\frac{{q(\mathbf{x}_t|\mathbf{y}_t)}}{p_c(\mathbf{x}_t|y_{1:t-1})} \right) \right\}\right\}
\\
&-\mathbb{E}_{p(\bar{y}_{t})} \left\{ D_{\mathrm{KL}}(p(x_t|\bar{\mathbf{y}}_t)\|p_c({x}_t|y_{1:t-1}))\right\}.
\end{aligned}
\end{equation}
Note that we have $q(x_t|y_t, \bar{y}_t) = q(x_t|y_t)$ due to the Markov property \cite{alemi2020variational,knoblauch2022optimization}.
Because the KL divergence is always positive, \eqref{eq.IB upper bound} can be upper bounded by
\begin{equation}\nonumber
\begin{aligned}
&I(\mathbf{x}_t,\mathbf{y}_t|\bar{\mathbf{y}}_t) 
\\
\leq & \mathbb{E}_{p(y_{t})} \left\{\mathbb{E}_{q(x_t|y_{t})} \left\{\log \left(\frac{{q(\mathbf{x}_t|\mathbf{y}_t)}}{p_c(\mathbf{x}_t|y_{1:t-1})} \right) \right\}\right\} 
\\
= & \mathbb{E}_{q(x_t|y_{t})} \left\{\mathbb{E}_{p(y_{t})} \left\{\log \left(\frac{{q(\mathbf{x}_t|\mathbf{y}_t)}}{p_c(\mathbf{x}_t|y_{1:t-1})} \right) \right\}\right\} 
\\
\approx & \mathbb{E}_{q(x_t|y_{t})} \left\{\log \left(\frac{{q(\mathbf{x}_t|y_t)}}{p_c(\mathbf{x}_t|y_{1:t-1})} \right) \right\}
\\
=& D_{\text{KL}} \left(q(x_t|y_t) \| p_c(x_t|y_{1:t-1})
\right).
\end{aligned}
\end{equation}
The approximate equality in the penultimate line is due to the substitution of expected values with sample values. Besides, the second term $I \left(\mathbf{x}_t, \mathbf{y}_t \right)$ is approximately lower bounded by 
\begin{equation}\label{eq.IB lower bound}
\begin{aligned}
&I \left(\mathbf{x}_t, \mathbf{y}_t \right) 
\\
= & H\left(\mathbf{y}_t\right) + \mathbb{E}_{q(x_t|y_{t})} \left\{\mathbb{E}_{p(y_{t})} \left\{\log{q(\mathbf{y}_t|\mathbf{x}_t)} \right\}\right\}
\\
= & H\left(\mathbf{y}_t\right) + \mathbb{E}_{q(x_t|y_{t})} \left\{\mathbb{E}_{p(y_{t})} \left\{\log{p(\mathbf{y}_t|\mathbf{x}_t)} \right\}\right\}
\\
&+ \mathbb{E}_{p(y_{t})} \left\{\mathbb{E}_{q(x_t|y_{t})} \left\{\log{q(\mathbf{y}_t|\mathbf{x}_t)} \right\}-\mathbb{E}_{q(x_t|y_{t})} \left\{\log{p(\mathbf{y}_t|\mathbf{x}_t)} \right\}\right\}
\\
= & H\left(\mathbf{y}_t\right) + \mathbb{E}_{q(x_t|y_{t})} \left\{\mathbb{E}_{p(y_{t})} \left\{\log{p(\mathbf{y}_t|\mathbf{x}_t)} \right\}\right\}
\\
&+ \mathbb{E}_{p(x_{t})} \left\{D_{\mathrm{KL}}(q(y_t|\mathbf{x}_{t}) \| p(y_t|\mathbf{x}_{t}))\right\}
\\
\geq & H\left(\mathbf{y}_t\right) + \mathbb{E}_{q(x_t|y_{t})} \left\{\mathbb{E}_{p(y_{t})} \left\{\log{p(\mathbf{y}_t|\mathbf{x}_t)} \right\}\right\}
\\
\approx & H\left(\mathbf{y}_t\right) + \mathbb{E}_{q(x_t|y_{t})} \left\{\log{p(y_t|\mathbf{x}_t)} \right\}.
\end{aligned}
\end{equation}
Here, 
$p(y_t|\mathbf{x}_t)$ is not real output probability. Instead, it is the value of the nominal output probability $p(\bar{y_t}|\mathbf{x}_t)$ at the real measurement data $y_t$.
Combining \eqref{eq.IB upper bound} and \eqref{eq.IB lower bound}, we can transform the unconstrained optimization problem in \eqref{eq.IB unconstrained} into minimizing its variational lower bound. 
\begin{equation}\label{eq.IB ELBO}
\begin{aligned}
&q_{\text{info}}(x_t|y_{t}) 
\\
=&
\argmin_{q(x_t|y_{t})} \Big\{
-\gamma
\mathbb{E}_{q(x_t|y_t)}\left\{ 
\log p(y_t|\mathbf{x}_t)
\right\} 
\\
&\quad \quad\quad\quad+
D_{\text{KL}} \left(q(x_t|y_t) \| p_c(x_t|y_{1:t-1})
\right)
\Big\}
\\
=& \argmin_{q(x_t|y_t)} \left\{\mathbb{E}_{q(x_t|y_t)} 
\left\{
\log \left(\frac{q(\mathbf{x}_t|y_t)}{p_c(\mathbf{x}_t|y_{1:t-1}) \cdot p(y_t|\mathbf{x}_t))^\gamma} \right)
\right\}
\right\}
\\
=& \frac{p_c(x_t|y_{1:t-1})p(y_t|x_t)^{\gamma}}{\int p_c(x_t|y_{1:t-1})p(y_t|x_t)^{\gamma} \,\mathrm{d}x_{t}}. 
\end{aligned}
\end{equation}
Here, the entropy of the measurement $H(\mathbf{y}_t)$ is omitted because it is a constant term regarding the optimization objective \eqref{eq.IB unconstrained}.
From \eqref{eq.IB ELBO}, it can be seen that by setting \(\gamma = \beta/(\beta + 1)\), the solution to the information bottleneck problem coincides with the update step of convolutional Bayesian filtering approximated by the exponential density rescaling technique. This relationship offers a new perspective for understanding our framework. In more details, the variable \(\gamma\) serves as a Lagrange multiplier that
balances the trade-off between reconstructing information about
measurement model and compressing representation of measurement data. As \(\beta\) increases, the compression bottleneck becomes less restrictive.  When \(\beta \to \infty\), convolutional Bayesian filtering simplifies to standard Bayesian filtering, and \(\gamma \to 1\), indicating that information is constructed without bottleneck.

\section{Simulations}\label{sec.simulations}
In this section, we evaluate our proposed framework across three benchmark systems to demonstrate its applicability to classic filtering algorithms in addressing model mismatches. We conduct 
$N=100$ Monte Carlo experiments with 
$M=40$ time steps for each simulation. In each experiment, the chosen evaluation metric is the root mean square error (RMSE), which is defined as
\begin{equation}\label{eq.ARMSE}\nonumber
\begin{aligned}
        \text{RMSE} &= \sqrt{\frac{1}{M}\sum_{i=1}^{M}\|x_i - \hat{x}_i\|^2}.
    \end{aligned}
\end{equation}
Here, $x_i, \hat{x}_i$ stand for the real and estimated state at the $i$-th step. This metric is averaged with 100 experiments for fair performance evaluation in our simulations.

\begin{figure}[!tb]
\centering
\begin{subfigure}{0.49\textwidth}
\includegraphics[width=\textwidth]{ 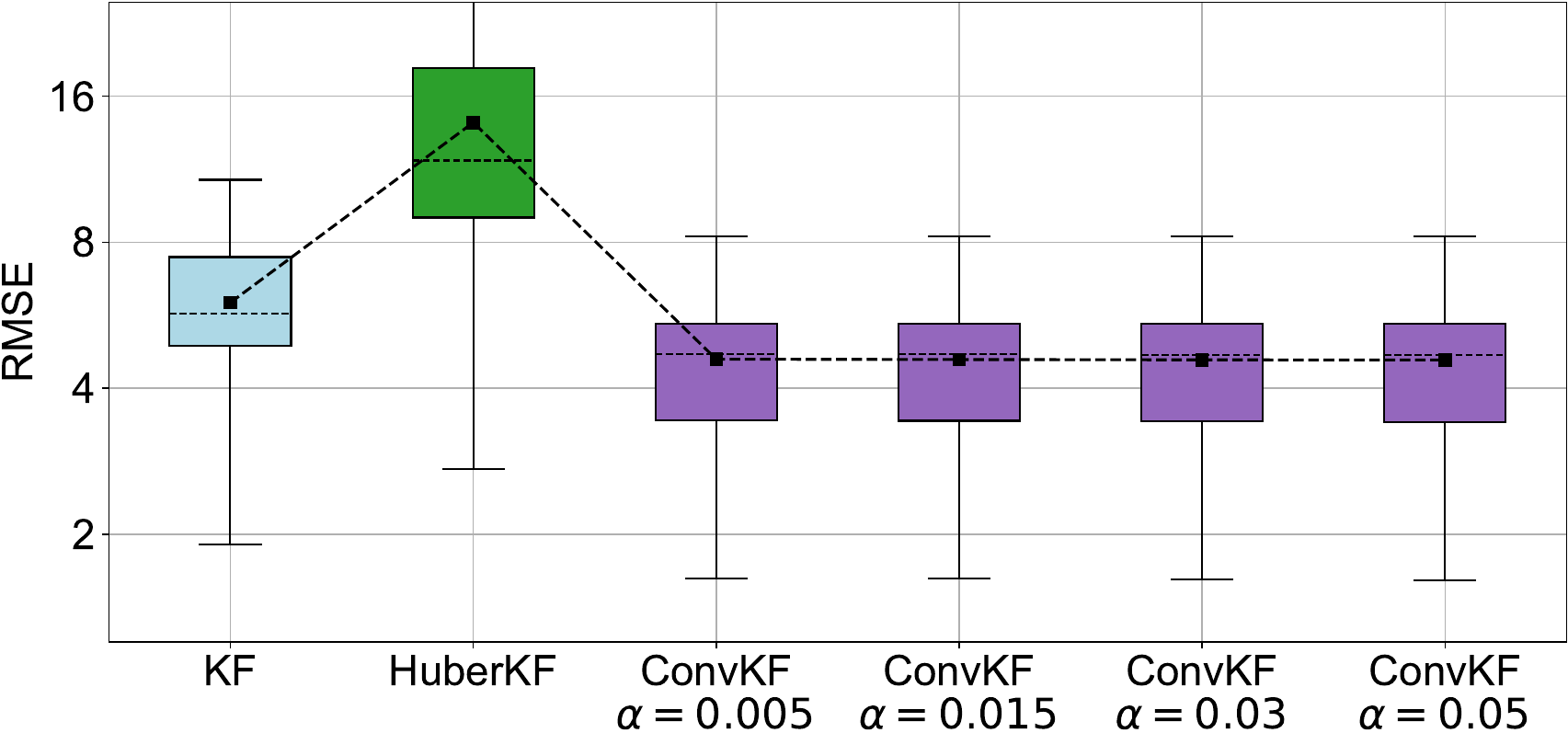}
\caption{Case A: Transition model mismatch}
\label{fig.wiener_s}
\end{subfigure}
\hfill
\\
\begin{subfigure}{0.49\textwidth}
\includegraphics[width=\textwidth]{ 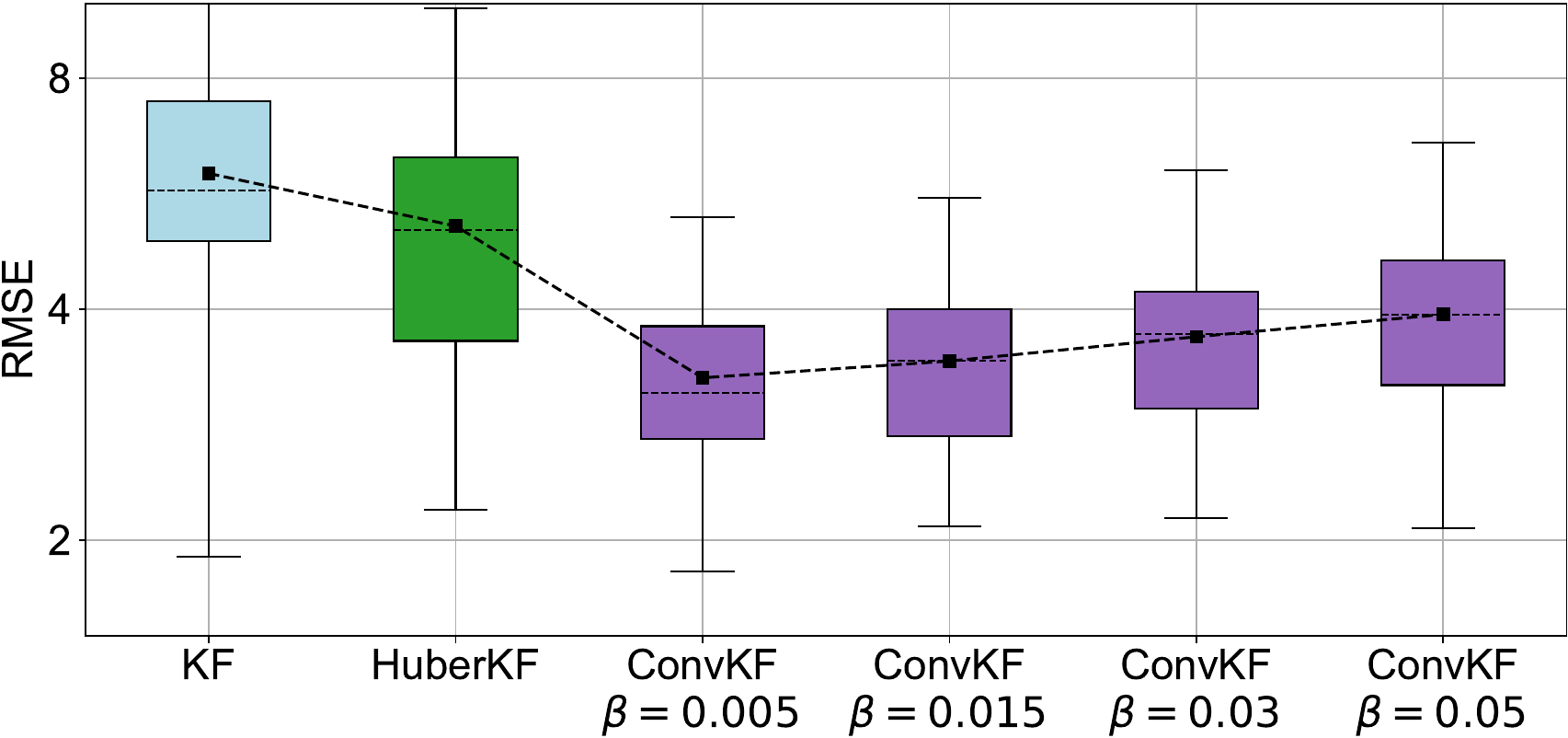}
\caption{Case B: Measurement model mismatch}
\label{fig.wiener_m}
\end{subfigure}
\hfill 
\caption{Box plot of RMSE for KF, HuberKF and ConvKF. The black square `` $\blacksquare$ " represents the average RMSE.} 
\label{fig.wiener_all_boxplot}
\end{figure}


\subsection{Linear Wiener Velocity Model}\label{subsec:Wiener}
The Wiener velocity model is a well-known standard environment in the field of target tracking, where the velocity is modeled as the Wiener process \cite{zhang2018degradation}.
The state $\mathbf{x} = \begin{bmatrix}
p_x, \, p_y, \, v_x, \, v_y    
\end{bmatrix}^{\top}$ consists of system positions $p_x, p_y$ and system velocities  $v_x, v_y$. The Wiener velocity model is described by
\begin{equation}\label{eq.wiener system}\nonumber
\begin{aligned}
\bar{\mathbf{x}}_{t+1} &= \begin{bmatrix}
1 & 0 & 0.1 & 0 \\
0 & 1 & 0 & 0.1 \\
0 & 0 & 1 & 0 \\
0 & 0 & 0 & 1
\end{bmatrix} \mathbf{x}_t + \bar{\mathbf{\xi}}_t, \\
\bar{\mathbf{y}}_t &= \begin{bmatrix}
1 & 0 & 0 & 0 \\
0 & 1 & 0 & 0
\end{bmatrix} \mathbf{x}_t + \bar{\mathbf{\zeta}}_t.
\end{aligned}
\end{equation}
\begin{figure*}[th]
\centering
\begin{subfigure}[t]{0.65\textwidth}
\includegraphics[width=\textwidth]{ 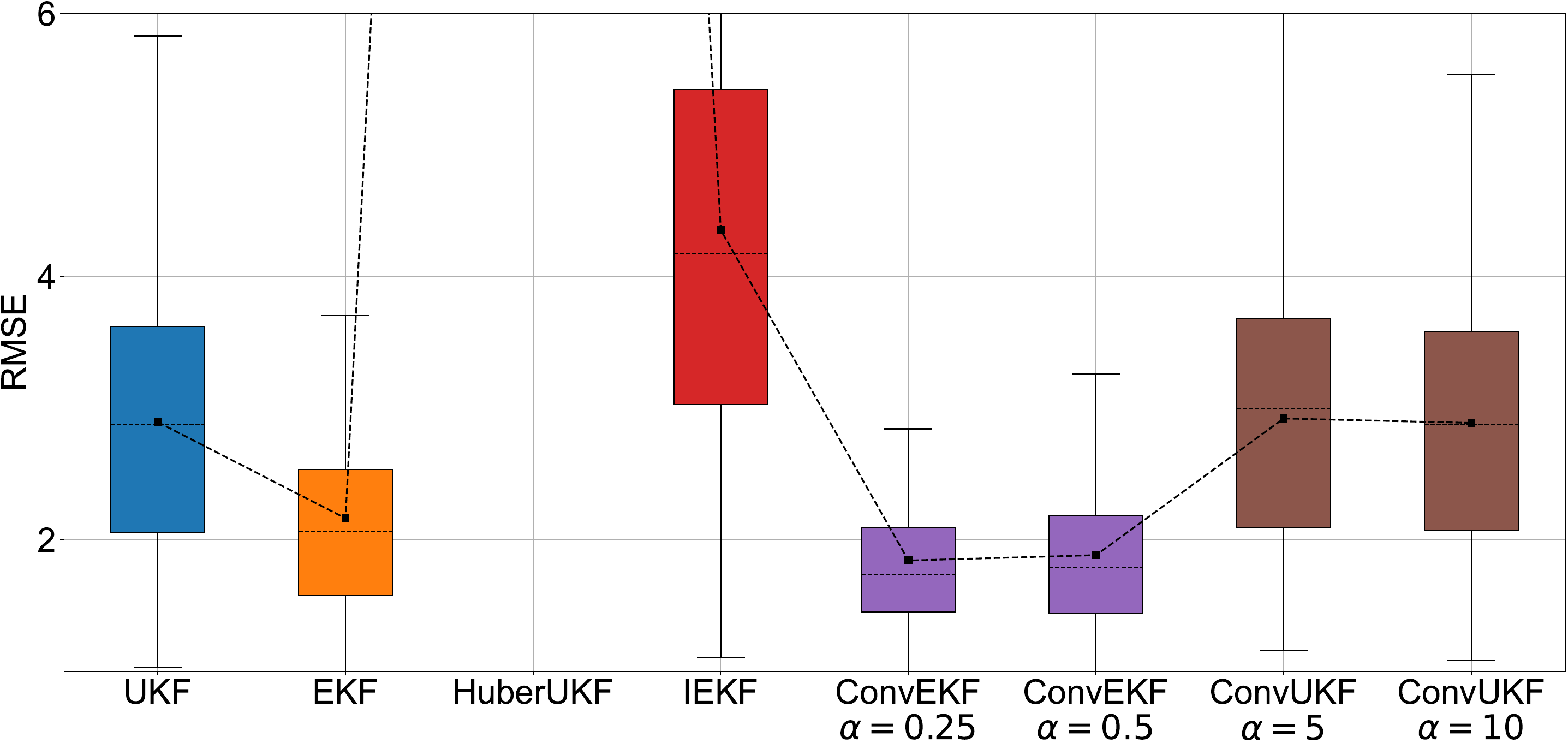}
\caption{Case A: Transition model mismatch}
\label{fig.sincos_s}
\end{subfigure}
\hfill 
\\
\begin{subfigure}[htb]{0.7\textwidth}
\includegraphics[width=\textwidth]{ 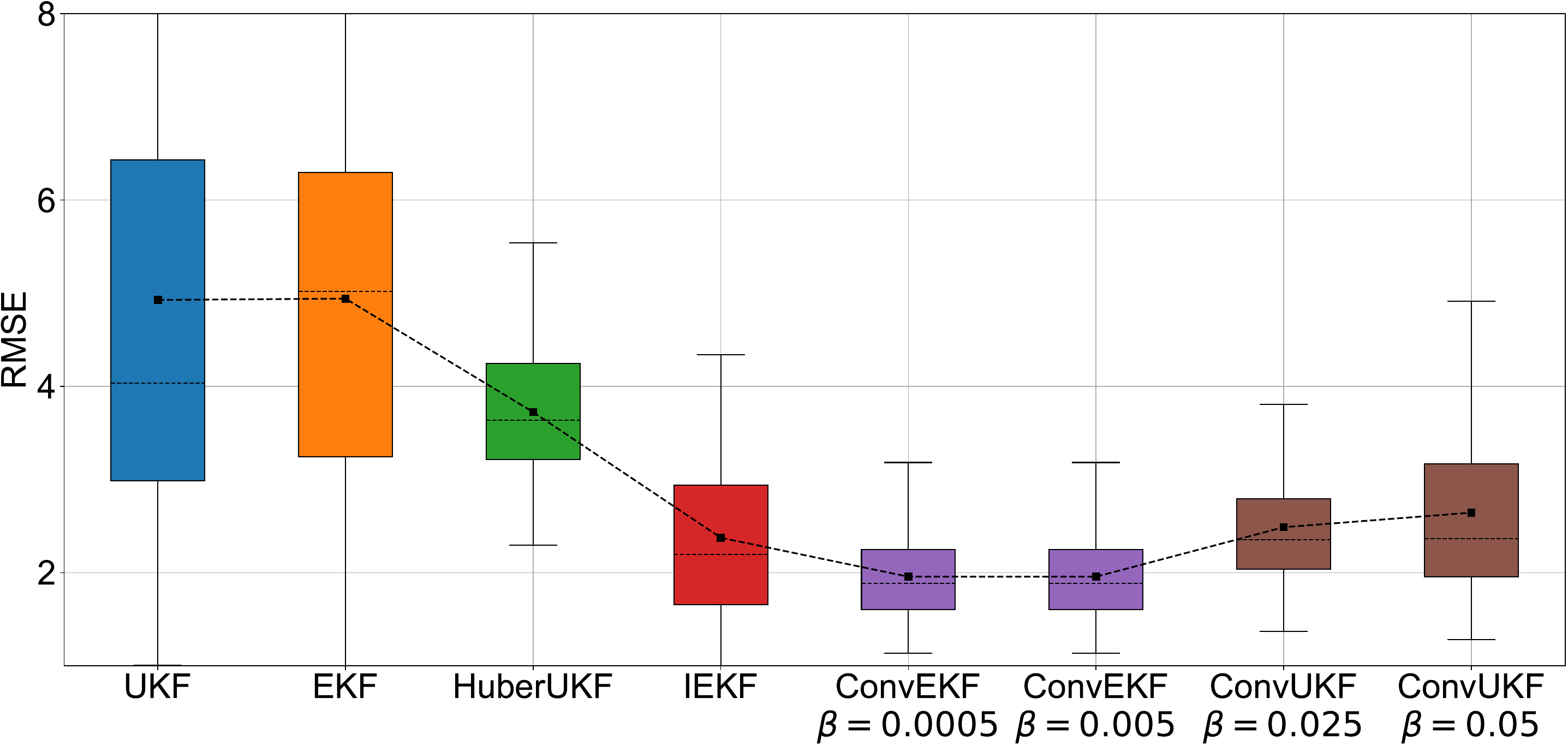}
\caption{Case B: Measurement model mismatch}
\label{fig.sincos_m}
\end{subfigure}
\caption{Box plot of RMSE for UKF, EKF, HuberUKF, IEKF, ConvEKF and ConvUKF.}
\label{fig.sincos_all_boxplot}
\end{figure*}

Here, the virtual process noise \( \bar{\xi}_t \) is modelled by \( \bar{\xi}_t \sim \mathcal{N}(0, Q) \) with covariance \( Q = \mathbb{I}_{4 \times 4} \), and virtual measurement noise satisfies \( \bar{\zeta}_t \sim \mathcal{N}(0, R) \) with \( R = \mathbb{I}_{2 \times 2} \). Additionally, the initial state \( \mathbf{x}_0 \) satisfies \( \mathbf{x}_0 \sim \mathcal{N}\left(\begin{bmatrix}
0 & 0 & 1 & 1
\end{bmatrix}^\top,\, \mathbb{I}_{4 \times 4}\right) \).

To evaluate the effectiveness of our designed filter, we consider two different cases for model mismatch, which is a common setting in existing works \cite{huber2004robust,roth2013student}:
\begin{itemize}
    \item \textbf{Case A: Transition Model Mismatch}: In the real system, the process noise is a mixture of Gaussian noises, while the measurement noise is Gaussian:
    \begin{equation}\nonumber
        \begin{aligned}
            \xi_t &\sim 0.9 \cdot \mathcal{N}(0, Q) + 0.1 \cdot \mathcal{N}(0, 100Q),\\
            \zeta_t &\sim \mathcal{N}(0, R).
        \end{aligned}
    \end{equation}

    \item \textbf{Case B: Measurement Model Mismatch}: The process noise is Gaussian, while the measurement noise is a mixture of Gaussian noises:
    \begin{equation}\nonumber
        \begin{aligned}
            \xi_t &\sim \mathcal{N}(0, Q),\\
            \zeta_t &\sim 0.9 \cdot \mathcal{N}(0, R) + 0.1 \cdot \mathcal{N}(0, 1000R).
        \end{aligned}
    \end{equation}

\end{itemize}

Our proposed convolutional Bayesian filtering framework is applied to the KF through the application of Corollary \ref{corollary.analytical}, which we have named the ConvKF. We conduct comparisons of ConvKF using various values for parameters defined in Corollary \ref{corollary.analytical}, with the standard KF and the Huber KF. Note that Huber KF is a widely-used robust method that replaces the quadratic loss in the optimization formulation of KF with the Huber loss \cite{huber2004robust}. In Fig.~\ref{fig.wiener_all_boxplot}, a box plot of the RMSE demonstrates that ConvKF outperforms the standard KF across a broad range of parameters in both cases A and B. Specifically, in case A, altering the exponential coefficient $\alpha$ from 0.005 to 0.05 results in an almost unchanged RMSE for ConvKF. In contrast, for case B, adjusting the exponential coefficient $\beta$ from 0.005 to 0.05 leads to a slight increase in the RMSE for ConvKF.


\subsection{Sequence Forecasting System}

In this subsection, we consider a popular nonlinear system used for sequence forecasting  \cite{tao2023outlier}. The state space model is given by
\begin{equation}\label{eq.sincos}\nonumber
\begin{aligned}
\bar{\mathbf{x}}_{t+1}&=x_t+\kappa_1 \cdot
\begin{bmatrix}
-1 & 0 \\
0.1 & -1 
\end{bmatrix} \mathbf{x}_t + \kappa_2 \cdot \cos(\mathbf{x}_t)+ \bar{\xi}_t,\\
\bar{\mathbf{y}}_t&=\mathbf{x}_t+\sin(\mathbf{x}_t) + \bar{\zeta}_t,
\end{aligned}
\end{equation}
where $\mathbf{x}_0\sim\mathcal{N}\left(
0 ,\,\mathbb{I}_{2\times2}\right)$. Both the constants $\kappa_1$ and $\kappa_2$ are set to $0.1$. 
We assume $Q=R=\mathbb{I}_{2\times2}$ for virtual process noise $\bar{\xi}_t \sim \mathcal{N}\left(0,\, Q\right)$ and virtual measurement noise $\bar{\zeta}_t \sim \mathcal{N}\left(0,\, R\right)$, respectively. 
We construct convolutional approaches for EKF and UKF by considering quadratic form distance metrics and an exponential distribution threshold variable, similar to  Corollary \ref{corollary.analytical}. These approaches are named the Convolutional EKF (ConvEKF) and Convolutional UKF (ConvUKF), respectively. Similar to the discussion in Subsection \ref{subsec:Wiener}, we compare our methods with the standard UKF \cite{julier1997new}, standard EKF \cite{smith1962application}, Huber UKF \cite{bing2018huber}, and Iterated EKF (IEKF) \cite{bell1993iterated}. The IEKF is a variant of the EKF that enhances linear approximation to nonlinear systems through iterative updates, thereby improving filter performance. The Huber UKF is a robust version of the UKF, which replaces the quadratic loss in the optimization of the update step with Huber loss. Our comparisons consider the following two cases:

\begin{itemize}
    \item \textbf{Case A: Transition Model Mismatch}: In the real system, the process noise is a mixture of Gaussian noises, while the measurement noise is Gaussian. Specifically, we have
    \begin{equation}\nonumber
        \begin{aligned}
            \xi_t &\sim 0.9 \cdot \mathcal{N}(0, Q) + 0.1 \cdot \mathcal{N}(0, 100Q),\\
            \zeta_t &\sim \mathcal{N}(0, R).
        \end{aligned}
    \end{equation}

    \item \textbf{Case B: Measurement Model Mismatch}: The process noise is Gaussian, while the measurement noise is a mixture of Gaussian noises. This is represented as
    \begin{equation}\nonumber
        \begin{aligned}
            \xi_t &\sim \mathcal{N}(0, Q),\\
            \zeta_t &\sim 0.9 \cdot \mathcal{N}(0, R) + 0.1 \cdot \mathcal{N}(0, 1000R).
        \end{aligned}
    \end{equation}
\end{itemize}

As demonstrated in Fig. \ref{fig.sincos_all_boxplot}, ConvEKF outperforms the other methods in both case A and case B over a wide range of parameters. Additionally, ConvUKF also shows improvements over the standard UKF, particularly in situations with measurement outliers. Notably, the Huber UKF fails in scenarios with transition model mismatch, possibly because it is designed to enhance robustness by considering the post-prediction prior in the update step, rather than directly incorporating robustness into the prediction step. 

\subsection{Isothermal Gas-phase Reactor Model}
We perform simulation on a commonly used isothermal gas-phase reactor model for state estimation \cite{schiller2023lyapunov}. This model describes the reversible chemical reaction $2A_r \rightleftharpoons B_r$. Initially, the reactor is charged with certain amounts of $A_r$ and $B_r$, but the exact composition of the original mixture remains uncertain. The state $\mathbf{x}$ includes the partial pressures, i.e.,   $\mathbf{x}=\left[\begin{matrix}P_A &P_B\end{matrix}\right]^\top$. The discrete-time version of the gas-phase reactor model with Euler method is
\begin{equation}\label{eq.isothermal gas-phase reactor}\nonumber
\begin{aligned}
\bar{\mathbf{x}}_{t+1}&=
\begin{pmatrix}
P_{A,t} +(-2k_1 \cdot P_{A,t}^2+2k_2 \cdot P_{B,t}) \cdot \mathrm{d}t \\
P_{B,t}+ (k_1 \cdot P_{A,t}^2 -k_2 \cdot P_{B,t}) \cdot \mathrm{d}t  \\
\end{pmatrix} + \bar{\xi}_t,  \\
\bar{\mathbf{y}}_t&= P_{A,t} +P_{B,t}+ \bar{\zeta}_t, 
\end{aligned}
\end{equation}
where ${\mathbf{x}}_0=[0.1,\,4.5]^\top $, $k_1=0.16$, $k_2=0.0064$, and  $\mathrm{d}t=0.1$. The virtual process noise satisfies $\bar{\xi}_t \sim \mathrm{Laplace}\left(0,\, Q\right)$ with $Q = 10^{-4} \mathbb{I}_{2\times2}$, and the virtual measurement noise satisfies $\bar{\zeta}_t \sim \mathrm{Laplace}\left(0, \, R \right)$ with $R = \mathbb{I}_{1\times1}$. For our subsequent verification, we will also set up two different simulations similar to the Section \ref{subsec:Wiener}:
\begin{itemize}
        \item \textbf{Case A, Transition model mismatch}: The measurement noise obeys Laplace distribution while the process noise is a mixture of Laplace noise, i.e.,
    \begin{equation}\nonumber
        \begin{aligned}
            \xi_t&\sim 0.9 \cdot \mathrm{Laplace}(0,Q) + 0.1 \cdot \mathrm{Laplace}(0,1000Q),\\
            \zeta_t&\sim \mathrm{Laplace}(0,R).
        \end{aligned}
    \end{equation}

        \item \textbf{Case B, Measurement model mismatch}:  The process noise obeys Laplace distribution while the measurement noise is a mixture of Laplace noise, i.e.,
    \begin{equation}\nonumber
        \begin{aligned}
            \xi_t&\sim\mathrm{Laplace}(0,Q),\\
            \zeta_t&\sim0.9 \cdot \mathrm{Laplace}(0,R)+ 0.1 \cdot \mathrm{Laplace}(0,1000R).
        \end{aligned}
    \end{equation}
\end{itemize}

We apply our proposed convolutional Bayesian filtering framework to PF, approximated using exponential density rescaling, and refer to it as ConvPF, as shown in Algorithm~\ref{alg.ConvPF}. Our method is compared with standard PF, auxiliary PF (APF) \cite{pitt1999filtering}, and student-t PF (STPF) \cite{xu2013robust}. Note that APF and STPF are two widely used robust algorithms. APF introduces an auxiliary variable to select particles based on both their weights and the likelihood of the current observation prior to the actual resampling step. This method focuses computational resources on more promising particles, enhancing the filter's performance, particularly in scenarios with tailed observation densities. On the other hand, STPF employs the Student's t distribution, which has heavier tails, making it more capable of handling extreme values or deviations from normal assumptions.

\begin{figure}[!t]
\centering
\begin{subfigure}{0.45\textwidth}
\includegraphics[width=\textwidth]{ 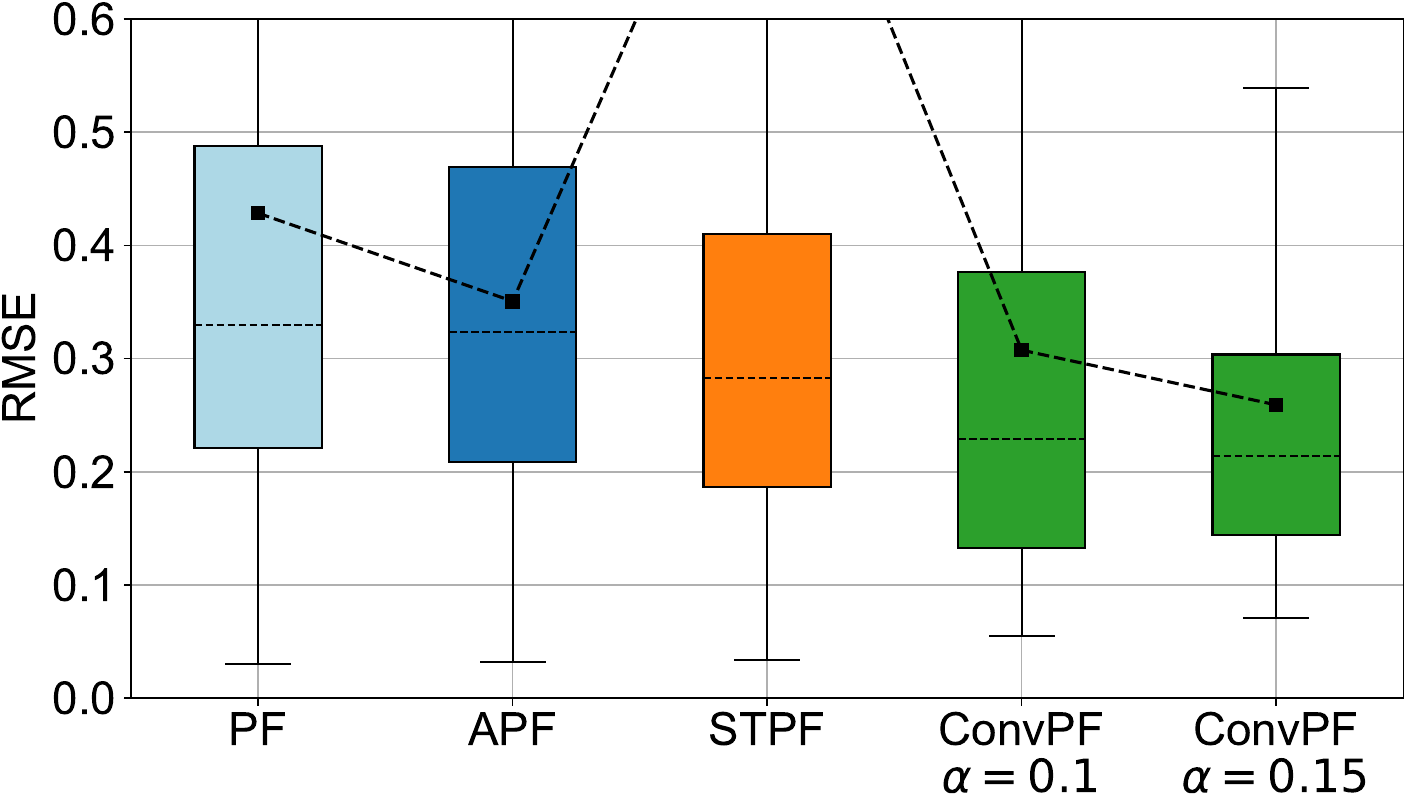}
\caption{Case A: Transition model mismatch}
\label{fig.reaction_s_laplace}
\end{subfigure}
\hfill 
\\
\begin{subfigure}{0.45\textwidth}
\includegraphics[width=\textwidth]{ 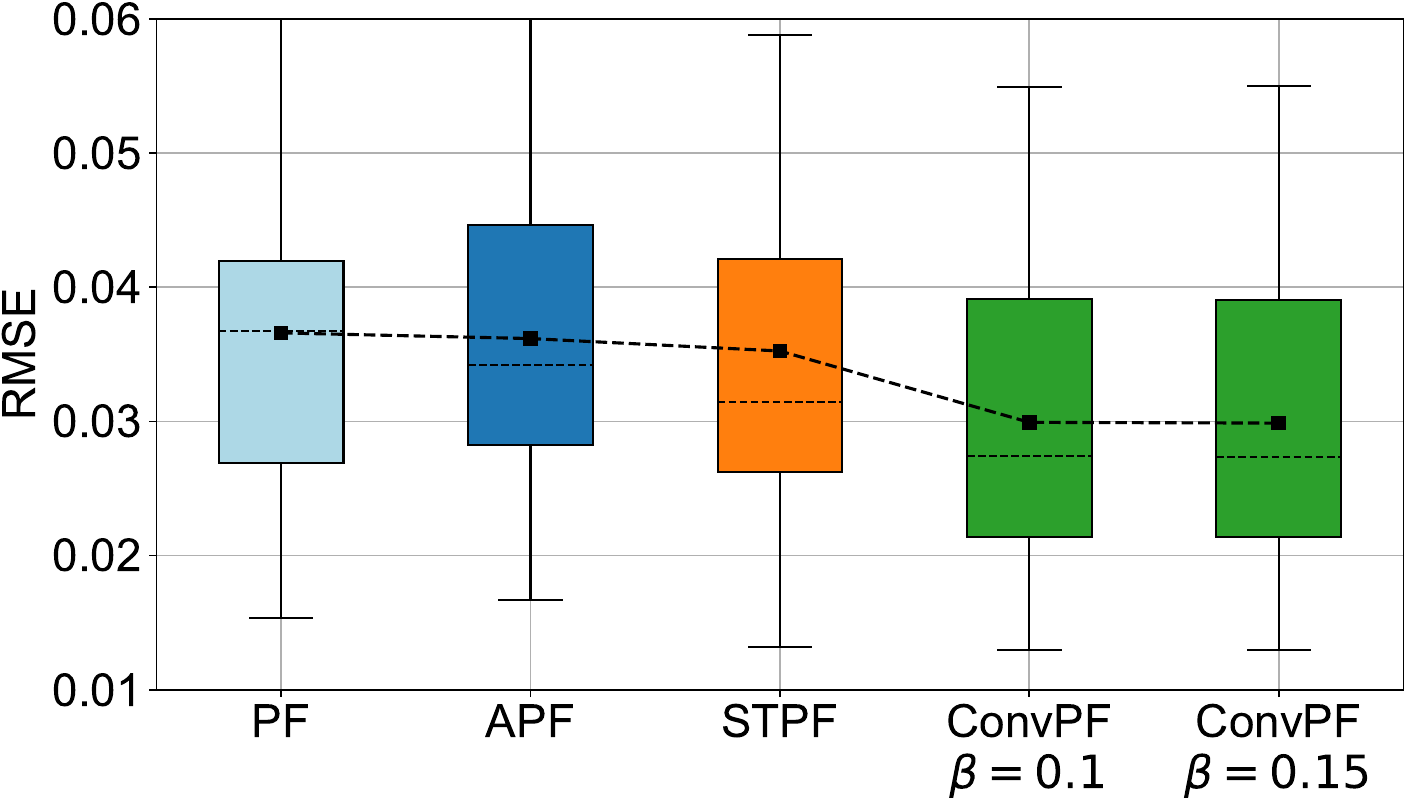}
\caption{Case B: Measurement model mismatch}
\label{fig.pf_m_laplace}
\end{subfigure}
\hfill 
\caption{Box plot of RMSE for PF, APF, STPF and ConvPF under Laplace noise conditions.} 
\label{fig.reaction_all_boxplot_laplace}
\end{figure}

As depicted in Fig.~\ref{fig.reaction_all_boxplot_laplace}, our approach yields the minimum estimation error in scenarios involving both transition (Case A), and measurement model mismatches (Case B). Although PF exhibits a marginal enhancement in RMSE over the standard PF, the improvement is not significant. STPF shows varied performance; in Case A, the STPF's median RMSE is marginally better than that of PF, yet its overall variance and average RMSE are notably higher. In Case B, the STPF does offer an improvement compared to the PF. However, our ConvPF method, with tuning parameters $\alpha$ and $\beta$, consistently outperforms the other methods.

\section{Conclusion and Discussion}\label{sec.discussion and conclusion}
This paper extends the definition of conditional probability and
introduces a convolutional Bayesian filtering framework by transforming transition and output probabilities into convolutional forms, broadening the scope of Bayesian filtering. We demonstrate that
convolutional Bayesian filtering possesses analytical forms of
convolution operation in systems with Gaussian noises.
For non-Gaussian cases, the transition and output probabilities can be effectively approximated by scaling them into fractional powers, when employing the relative entropy as the distance measure. This leads to an enhanced version of the Kalman filter, which achieves robustness through simple modifications to the noise covariance matrix, while still preserving the conjugate nature of Gaussian distributions.
The practical efficacy of convolutional Bayesian filtering is demonstrated through its application to various common filtering algorithms, including the Kalman filter, extended Kalman filter, unscented Kalman filter, and particle filter.

In this paper, our primary focus is the generalization of Bayesian filtering theory to a convolutional form. Bayesian filtering undeniably forms the foundation of optimal filtering theory for discrete-time systems, highlighting the significance and applicability of our extension. Nevertheless, it's also crucial to acknowledge the distinctive aspects of filtering theory for continuous-time systems. In these systems, the conditional density function of states typically derives from numerical solutions of Kusher’s or
Duncan-Mortensen-Zakai’s equations \cite{zakai1969optimal,mortensen1966optimal}, rather than Bayes' law. A notable advancement in this domain is the Yau-Yau method \cite{yau2000real, yau2008real}, which is rigorously proven to converge to a global solution (a type of convergence otherwise only seen in particle filters in discrete-time systems) and can be pre-computed offline, facilitating real-time applications \cite{chen2018real,luo2013complete}. While we do not explore how to apply our approach to continuous-time systems in this paper, such an extension is a compelling future research avenue.

\bibliographystyle{plain}
\bibliography{autosam}

\end{document}